\pgfplotsset{compat=1.6}
\newtheorem{definition}{Definition}[section]%
\newtheorem{proposition}{Proposition}[section]%
\newtheorem{theorem}{Theorem}[section]%
\theoremstyle{definition}
\newtheorem{remark}{Remark}[section]
\newcommand{\green}[1]{{\color{green}#1}}
\newcommand{\cyan}[1]{{\color{cyan}#1}}
\newcommand{\orng}[1]{{\color{orange}#1}}
\newcommand{\mgnt}[1]{{\color{blue}#1}}
\newcommand{\norm}[1]{\Vert{#1}\Vert}	
\newcommand{\N}{\ensuremath{\mathbb{N}}}    
\newcommand{\R}{\ensuremath{\mathbb{R}}}	
\DeclareMathOperator{\argmin}{arg\,min}
\renewcommand{\v}{\ensuremath{\boldsymbol}}	
\journal{arXiv}
\begin{document}

\begin{frontmatter}



\title{Graph-Instructed Neural Networks for Sparse Grid-Based Discontinuity Detectors}


\author[inst1,inst3]{Francesco Della Santa\corref{cor1}}
\cortext[cor1]{Corresponding author}

\affiliation[inst1]{organization={Department of Mathematical Sciences, Politecnico di Torino},
            addressline={Corso Duca degli Abruzzi 24}, 
            postcode={10129}, 
            state={Turin},
            country={Italy}}

\affiliation[inst3]{organization={Gruppo Nazionale per il Calcolo Scientifico INdAM},
            addressline={Piazzale Aldo Moro 5}, 
            postcode={00185}, 
            state={Rome},
            country={Italy}}

\author[inst1,inst3]{Sandra Pieraccini}

\begin{abstract}
In this paper, we present a novel approach for detecting the discontinuity interfaces of a discontinuous function. This approach leverages Graph-Instructed Neural Networks (GINNs) and sparse grids to address discontinuity detection also in domains of dimension larger than 3. GINNs, trained to identify troubled points on sparse grids, exploit graph structures built on the grids to achieve efficient and accurate discontinuity detection performances. We also introduce a recursive algorithm for general sparse grid-based detectors, characterized by convergence properties and easy applicability.
Numerical experiments on functions with dimensions $n = 2$ and $n = 4$ demonstrate the efficiency and robust generalization properties of GINNs in detecting discontinuity interfaces. Notably, the trained GINNs offer portability and versatility, allowing integration into various algorithms and sharing among users.
\end{abstract}





\begin{keyword}
Discontinuous functions \sep Sparse grids \sep Deep learning \sep Graph neural networks \sep Discontinuity interface detection
\MSC[2020] 
68T07 
\sep 03D32 
\sep 65D40 
\end{keyword}

\end{frontmatter}



\section{Introduction}\label{sec:intro}

Detecting discontinuity interfaces of discontinuous functions is a challenging task with significant implications across various scientific and engineering applications. Identifying these interfaces is particularly critical for functions with a high-dimensional domain, as their discontinuities can significantly influence the behavior of numerical methods and simulations; for example, within the realm of uncertainty quantification, where the smoothness of the target function plays a fundamental role in the use of stochastic collocation methods. Specifically, the knowledge of discontinuity interfaces enables the partitioning of the function domain into regions of smoothness, a crucial factor in improving the performance of numerical methods (e.g., see \cite{JAKEMAN2013}). Other examples of discontinuity detection applications include signal processing, investigations of phase transitions in physical systems \cite{Jaeger1998_PHASETRANSITION}, and change-point analyses in geology or biology, to name a few \cite{Wang2022}. 

The central objective of most discontinuity detection methods is to identify the position of discontinuities in the function domain using function evaluations on sets of points. Over the last few decades, progresses have been made in discontinuity detection, leading to the development of various algorithms. Notable works, such as \cite{Archibald2005,Archibald2009,Jakeman2011,ZWGB2016}, have introduced significant contributions in this field. In particular, \cite{Archibald2005} introduced a polynomial annihilation edge detection method designed for piece-wise smooth functions with low-dimensional domains ($n\leq 2$). This method identifies discontinuous interfaces by reconstructing jump functions based on a set of function evaluations. Starting from these results, \cite{Archibald2009} extended the approach to higher dimensions, applying the detection method for each input dimension within a generalized polynomial chaos approximation of the target function. However, in \cite{Archibald2009}, the curse of dimensionality restricts considerably the practical applicability of the method. A significant advance in addressing the curse of dimensionality was made in \cite{Jakeman2011}, where sparse grids (see \cite{Sm63,bungartz_griebel_2004,Piazzola2024}) were utilized to create an adaptive method. This approach increased the method's potential for use in higher dimensions. On the other hand, few years later, \cite{ZWGB2016} presented a unique approach based on the approximation of hypersurfaces representing the discontinuity interfaces using hyper-spherical coordinates. This method is suitable for large domain dimensions but was primarily designed for detecting a single interface, assuming it satisfies star-convexity assumptions.

Other techniques addressing the discontinuity detection problem include wavelet-based methods \cite{Bozzini1994,Suresh2016}, filter-based algorithms \cite{Jain1995book,Wei2005}, and troubled cell detection \cite{Gao2017,Vuik2016,Ray2018,Wang2022}. In the recent work \cite{Bracco2023}, the authors present a method using the so-called null rules, applied and tested on bivariate functions; in particular, this method analyzes the rules' asymptotic properties, introducing two indicators for function and gradient discontinuities for the detection of points near discontinuities. Moreover, the method presented in \cite{Bracco2023} is integrated with adaptive approximation techniques based on hierarchical spline spaces for reconstructing surfaces with discontinuities.

In recent years, Neural Networks (NNs) have also been successfully applied to discontinuity detection. For example, in \cite{DISC4NN} a novel approach emerged, aimed at constructing discontinuous NNs capable of approximating discontinuous functions incorporating trainable discontinuity jump parameters, and enabling the model to simultaneously learn the target function and its discontinuity interface. This method offered several advantages, with few restrictions on its applicability; actually, the main limit is the curse of dimensionality related to the training set cardinality, which can be very large for functions in high dimensional domains.

Another interesting application of NNs in this context is reported in \cite{Wang2022}, which introduces a discontinuity detection method based on specifically tailored Convolutional Neural Network (CNN) models. This study has been motivated by the success of CNNs in the context of image edge detection (e.g., \cite{ElSayed2013,Liu2019,Wang2016inbook,Wen2018,Xue2019}); indeed, the problem of edge detection in computer vision is essentially a two-dimensional discontinuity detection problem, with observed data taking the form of intensity values of image pixels. The CNN-based method developed in \cite{Wang2022} consists of a two-level detection procedure. The procedure involves a coarse-to-fine process with two detectors: the first aimed at preliminary detection over coarsened grids for rapid identification of coarse-troubled cells, and the second focused on refining and providing detailed detection within the detected coarse-troubled cells, primarily on fine grids. In general, \cite{Wang2022} introduces a cheap and very efficient method for discontinuity detection with respect to functions with two- or three-dimensional domain and, in principle, works also in higher dimensions. Nonetheless, the curse of dimensionality limits its applicability in the latter case, both for the need of using regular grids for function evaluations and for the need of implementing $n$-dimensional convolutional layers with $n > 3$ (typically, existing Deep Learning frameworks address dimensions $n\leq 3$, such as \cite{tensorflow2015-whitepaper}). 

We believe that \cite{Wang2022} describes a very promising direction; in this work, we modify and extend the metodology of \cite{Wang2022}, moving from regular grids (and CNNs) to sparse grids (and NNs based on graphs), in order to tackle the problem in dimensions higher than 3.
In particular, we introduce a novel approach that utilizes Graph-Instructed Neural Networks (GINNs) \cite{GINN,EWGINN} trained for detecting the so-called troubled points in a sparse grid used for the function evaluations; in particular, each GINN is based on the adjacency matrix of a graph built on the structure of the chosen sparse grid. In addition, partially inspired by \cite{Jakeman2011}, we define a recursive algorithm for general sparse grid-based detectors, and characterized by finite termination and convergergence properties. Then, running this algorithm using as detector a GINN well-trained in dimension $n$, we obtain a cheap and fast discontinuity detection method suitable for any function with $n$-dimensional domain. Actually, also Multi-Layer Perceptrons (MLPs) can be trained as sparse grid-based detectors but, according to the results in \cite{GINN}, the experiments show that GINNs take advantage of the graph structure built on the sparse grid, returning better results.

Concerning the experiments, we test our method on functions with domains of dimension $n=2$ but also on a function with domain of dimension $n=4$, in order to prove its efficiency. The results obtained are promising and show very good generalization abilities of the GINNs in detecting discontinuity interfaces even for functions different from the ones used in the training sets. Moreover, we point the attention of the reader to the fact that the trained GINNs for discontinuity detection have the advantage of being portable and versatile; i.e., they can be shared among different users and integrated into new algorithms. For this reason, we make available the trained GINNs used in the two- and four-dimensional experiments for public use and research (see \Cref{rem:github_link}).

The work is organized as follows. 
In \Cref{sec:SGandGraphs}, we recall briefly how to build sparse grids based on equispaced collocation knots and we introduce the concept of sparse grid graph. In \Cref{sec:disc_det}, we give the definition of discontinuity detectors and we introduce the algorithms for finding the discontinuity interfaces of a discontinuous function. Then, in \Cref{sec:NN_discontinuity_det}, we describe the procedure for building and training a NN-based discontinuity detector (both MLP and GINN); further details are illustrated in \ref{sec:building_details}, at the end of the paper. In \Cref{sec:num_exp}, we show the results of the NN-based discontinuity detectors for two-dimensional functions (including images, for edge-detection applications) and four-dimensional functions, proving the efficiency and the promising potentialities of the new method, especially if based on GINNs. We end with some conclusions drawn in \Cref{sec:conc}.


\section{Problem Settings and Preliminaries}\label{sec:SGandGraphs}

In this work, we consider the discontinuity detection problem with respect to a discontinuous function $g:\Omega\subseteq\R^n\rightarrow\R$, where the task is performed exploiting sparse grids in order to contrast the curse of dimensionality. In this Section, we briefly recall the basic notions about sparse grids, according to the overview described in \cite{Piazzola2024} and focusing on the case of nested and equispaced sets of collocation knots; then, we introduce new definitions, mainly focused on building graphs with vertices corresponding to the points of a sparse grid. Indeed, the algorithm we introduce later in this work for discontinuity detection (see \Cref{sec:discdet_algorithm}) is based on such a kind of graphs.

For the sake of simplicity, let us consider in the following $\Omega=[\alpha_1,\beta_1]\times\cdots\times[\alpha_n,\beta_n]$. This is not a restricting assumption as any function $g:\Omega\subseteq\R^n\rightarrow\R$ can be extended, continuously or not, to a hyperrectangular domain $R=[\alpha_1,\beta_1]\times\cdots\times[\alpha_n,\beta_n]$; e.g., setting $g(\v{x})\equiv 0$ for each $\v{x}\in R\setminus \Omega$.

\subsection{Sparse Grids}\label{sec:SG}

Sparse grids were originally introduced by Smolyak in \cite{Sm63} for high dimensional quadrature and since then also used for polynomial interpolation and stochastic collocation (e.g., see \cite{Barthelmann2000,Nobile2007,Nobile2008}). While referring the reader to \cite{bungartz_griebel_2004} for a comprehensive work on sparse grids, we sketch here the main ideas.

A sparse grid $\mathcal{S}$ in $\R^n$ is the union of a collection of $n$-dimensional tensor grids, each one obtained by taking the Cartesian product of univariate sets of knots hyerarchically built through a number of refinement levels. Starting from a univariate distribution of knots, we consider, on the $i$-th interval $[\alpha_i ,\beta_i]$, a sequence of set of knots $\{\mathcal{K}_{i, h}\}_{h \geq 1}$, where the set $\mathcal{K}_{i, h+1}$ is a given refinement of the set $\mathcal{K}_{i, h}$, for each $h\geq 1$. Sparse grids are then obtained by collecting suitable $n$-dimensional tensor products of knot sets $\mathcal{K}_{i, h}$. More precisely, a sparse grid is built with the following steps:
\begin{enumerate}
    \item Upon selecting a univariate knot distribution, choose an initial starting grid $\mathcal{K}_{i, 1}$ on each interval  $[\alpha_i,\beta_i]$; let $m(1)$ denote the number of nodes in $\mathcal{K}_{i, 1}$, for all $i=1, \ldots, n$; then, for $h>1$, $h \in \N$, consider a finer grid $\mathcal{K}_{i, h}$ with $m(h)$ nodes, with $m(h+1) \geq m(h)$; the index $h$ represents the refinement level of $\mathcal{K}_{i, h}$.

    \item Choose a set $\mathcal{I}\subset \N_+^n$ of multi-indices; for each $\v{h}=(h_1,\ldots ,h_n) \in \mathcal{I}$, let $\mathcal{S}_{\v{h}}$ denote the $n$-dimensional grid defined as
    \begin{equation*}\label{eq:tensor_grid}
    \mathcal{S}_{\v{h}} = \bigotimes_{i=1}^n \mathcal{K}_{i,h_i}\,.
    \end{equation*}
    
    \item The sparse grid $\mathcal{S}$ is the union of all the tensor grids generated at the previous step:
    \begin{equation*}
        \mathcal{S} := \bigcup_{\v{h}\in\mathcal{I}} \mathcal{S}_{\v{h}}\,.
    \end{equation*}
\end{enumerate}
Several choices are possible for the univariate knot distribution, for the number of nodes $m(h)$ to be introduced on an interval at a given refinement level $h$, and for the set of multi-indices used to build $\mathcal{S}$. We refer the reader to \cite{bungartz_griebel_2004} for a comprehensive work on sparse grids, and to \cite{Piazzola2024} for a MATLAB library for the generation of sparse grids; in \cite{Piazzola2024} the parameter $h$ is called \emph{discretization level}, 
 and the function $m:\N_+\rightarrow\N_+$ providing $m(h)$ for each $h$ is called \emph{level-to-knots} function.
 
Herein, we focus on equispaced nodes, essentially doubling the number of nodes at each refinement, namely
\begin{equation}\label{eq:ltk_func_doubling}
        m(h) := 
        \begin{cases}
            1\,, \quad & \text{if }h=1,\\
            2^{h-1}+1\,, \quad & \text{otherwise}.
        \end{cases}
\end{equation}
It is worth noting that this choice, upon refinement, yields a nested set of nodes.

As far as the selection of multi-indices set is concerned, let $h^*\in\N_+$ denote a fixed level of approximation; then, the subset can be described as
\begin{equation*}\label{eq:multiindex_set_withrule}
        \mathcal{I}(h^*):=\{\v{h}\in \N_+^n \, |\, r(\v{h})\leq h^*\}\,.
    \end{equation*}
where $r:\N_+^n\rightarrow \N_+$ is suitably chosen. Some examples are \cite{bungartz_griebel_2004,BackNobileTamelliniTempone2011,Piazzola2024}:
\begin{equation}\label{eq:multiindex_rules}
        r_{\text{prod}}(\v{h}) := \prod_{i=1}^n h_i\,, \quad r_{\text{sum}}(\v{h}) := \sum_{i=1}^n h_i\,, \quad  
        r_{\max}(\v{h}) := \max_{i=1,\ldots ,n} h_i\,.
    \end{equation}
Choice $r_{\text{sum}}$ is known as the Total Degree rule, in the framework of polynomial approximation; choice  $r_{\max}$ corresponds to a full tensor grid.
The multi-index generated by choices in \eqref{eq:multiindex_rules} will be denoted by $\mathcal{I}_{\text{prod}}(h^*)$, $\mathcal{I}_{\text{sum}}(h^*)$, and $\mathcal{I}_{\max}(h^*)$, respectively. 
    
In the following, we show a sparse grid example, starting from $[\alpha_i,\beta_i]=[-1,1]$ for $i=1,2$, from univariate equispaced nodes following \eqref{eq:ltk_func_doubling}, and using the multi-indices set $\mathcal{I}_{\text{sum}}(6)$; the obtained 2-dimensional sparse grid is depicted in  \Cref{fig:sg_cross_2d_noedges}. From now on, for simplicity, we will denote as \emph{equispaced sparse grid} any sparse grid characterized by an initial equispaced nodes distribution.

\begin{figure}[htb]
    \centering
    \includegraphics[width=0.35\textwidth]{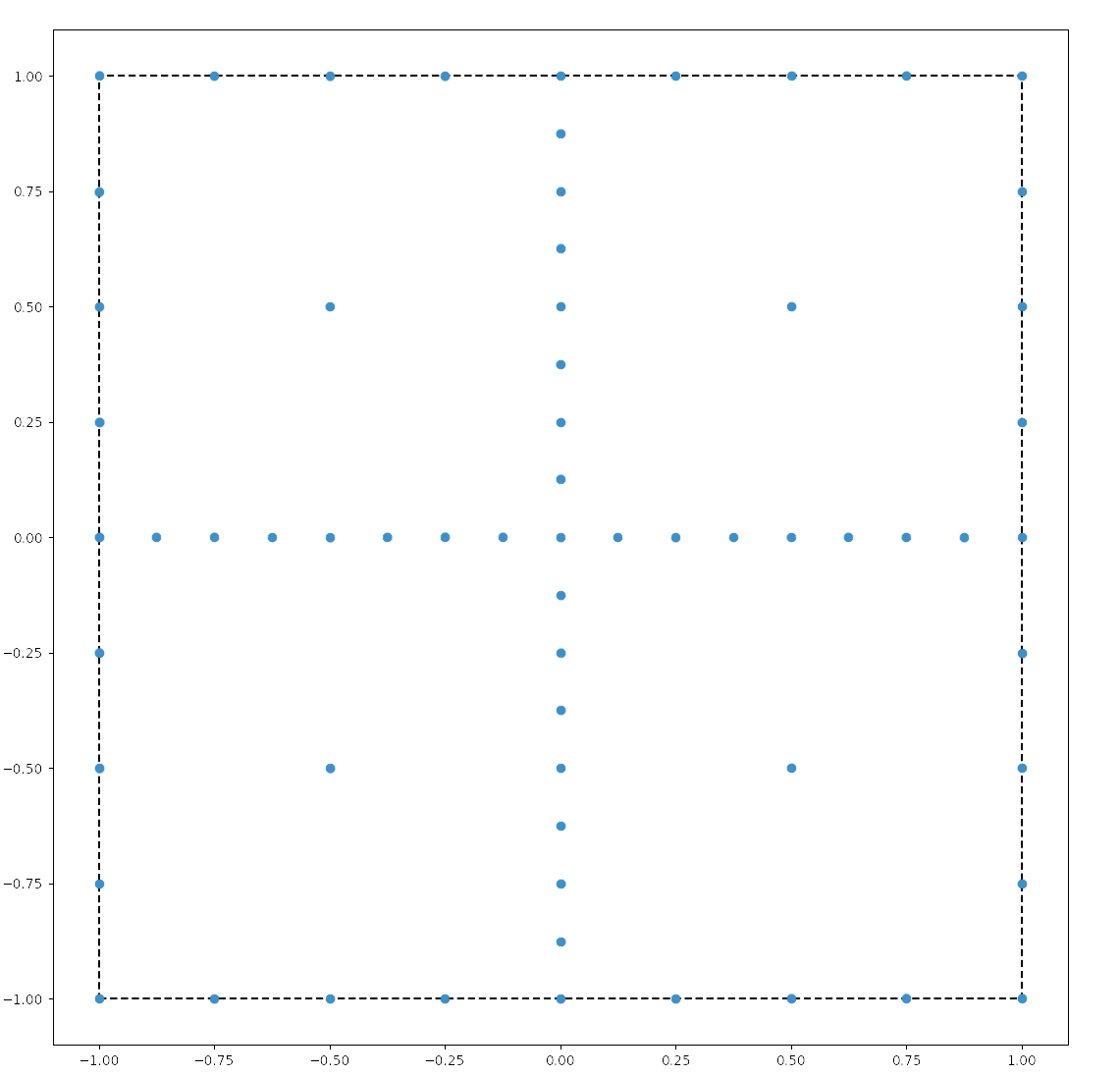}
    \caption{Example of sparse grid in $\R^2$ obtained with $[\alpha_i,\beta_i]=[-1,1]$ for $i=1,2$, level-to-knot function $m$ (see \eqref{eq:ltk_func_doubling}), equispaced knots $\mathcal{K}^{[2]}_{i,h}$, and multi-indices set $\mathcal{I}_{\text{sum}}(6)$. Black dotted lines depict the boundary of the sparse grid box (see \Cref{def:sg_box}).}
    \label{fig:sg_cross_2d_noedges}
\end{figure}

\subsubsection{Sparse Grid Box and Sparse Grid Similarity}\label{sec:sgBox_sgSim}

In view of the description of the discontinuity detection algorithm presented in \Cref{sec:disc_det}, we introduce the following definitions related to sparse grids.

\begin{definition}[Sparse Grid Box]\label{def:sg_box}
Let $\mathcal{S}$ be a sparse grid in $\R^n$. Then, the hyperrectangle
\begin{equation}\label{eq:sg_box}
    \Omega = [\alpha_1, \beta_1] \times \cdots \times [\alpha_n, \beta_n]\,
\end{equation}
upon which $\mathcal{S}$ is built, is called \emph{box} of $\mathcal{S}$ and denoted as $\mathbb{B}(\mathcal{S})$

\end{definition}

\quad

\begin{definition}[Sparse Grid Similarity]\label{def:sg_similarity}
Let $\mathcal{S}'$ and $\mathcal{S}''$ be two sparse grids in $\R^n$, given by the set of points
$\mathcal{S}'=\{\v{x}'_1,\ldots ,\v{x}'_N\}$ and $\mathcal{S}''=\{\v{x}''_1,\ldots ,\v{x}''_N\}$, respectively. Then, $\mathcal{S}'$ and $\mathcal{S}''$ are similar if and only if there exist $a, b \in \R$, $a\neq 0$, such that
\begin{equation}\label{eq:sg_similarity}
\v{x}'_i = a\,\v{x}''_i + b\,,
\end{equation}
for all $i=1,\ldots , N$.
The similarity between two sparse grids is denoted by $\mathcal{S}'\simeq\mathcal{S}''$.
\end{definition}

\quad

\subsection{Sparse Grid Graphs}

The position of the points of a sparse grid is naturally inclined to define a graph. Nonetheless, to the best of authors' knowledge, in literature there are no formal procedures to build a graph from a sparse grid. Therefore, we introduce here a formal definition for a graph associated to a sparse grid in $\R^n$. The graph representations of sparse grids is aimed at exploiting the connections between the grid points in the framework of a discontinuity detection task (see later \Cref{sec:disc_det}). 

Given two points $\v{x}_1,\v{x}_2 \in \R^n$, we let $\overline{(\v{x}_1,\v{x}_2)}$ denote the line segment connecting $\v{x}_1$ and $\v{x}_2$.
We define a graph associated to $\mathcal{S}$ according to the following definition.

\begin{definition}[Sparse Grid Graph]\label{def:graph_sg}
Let $\mathcal{S} \subset \R^n$ be a sparse grid made of $N$ points. We define \emph{sparse grid graph} (SGG) based on $\mathcal{S}$ the non-directed graph $G=(\mathcal{S}, E)$ such that, for each $\v{x}_i,\v{x}_j\in \mathcal{S}$, $\v{x}_i \neq \v{x}_j$, the set $\{\v{x}_i, \v{x}_j\}$ is an edge of $G$ if and only if:
\begin{enumerate}[label=(\roman*)]
    \item $\v{x}_i,\v{x}_j$ are aligned along one of the {axes} in the $\R^n$ space;
    \item \label{SGG:rule_1} there is no $\v{x}\in \mathcal{S}$, $\v{x}\neq\v{x}_i,\v{x}_j$, such that $\v{x}$ belongs to the segment $\overline{(\v{x}_i,\v{x}_j)}$;
    \item \label{SGG:rule_2} it holds 
    \begin{equation*}
        \norm{\v{x}_i - \v{x}_j} < \norm{\v{x}_p - \v{x}_q}\,,
    \end{equation*}
    for each $\v{x}_p,\v{x}_q\in \mathcal{S}$ such that $\overline{(\v{x}_p,\v{x}_q)}$ is perpendicular to $\overline{(\v{x}_i,\v{x}_j)}$ and $\overline{(\v{x}_p,\v{x}_q)}\cap\overline{(\v{x}_i,\v{x}_j)}\neq \emptyset$.
\end{enumerate}
\end{definition}

\quad

In a nutshell, we connect two points of the sparse grid aligned along one of the {axes} if no other grid points exist between them (item \ref{SGG:rule_1}, see \Cref{fig:graph_from_sg_firstexample}-left); then, we remove all the edges corresponding to line segments $\overline{(\v{x}_i,\v{x}_j)}$ that intersect other line segments $\overline{(\v{x}_p,\v{x}_p)}$ whose length is smaller than or equal to the one of $\overline{(\v{x}_i,\v{x}_j)}$ (item \ref{SGG:rule_2}, see \Cref{fig:graph_from_sg_firstexample}-right).

\begin{figure}[htb]
    \centering
    \includegraphics[width=0.35\textwidth]{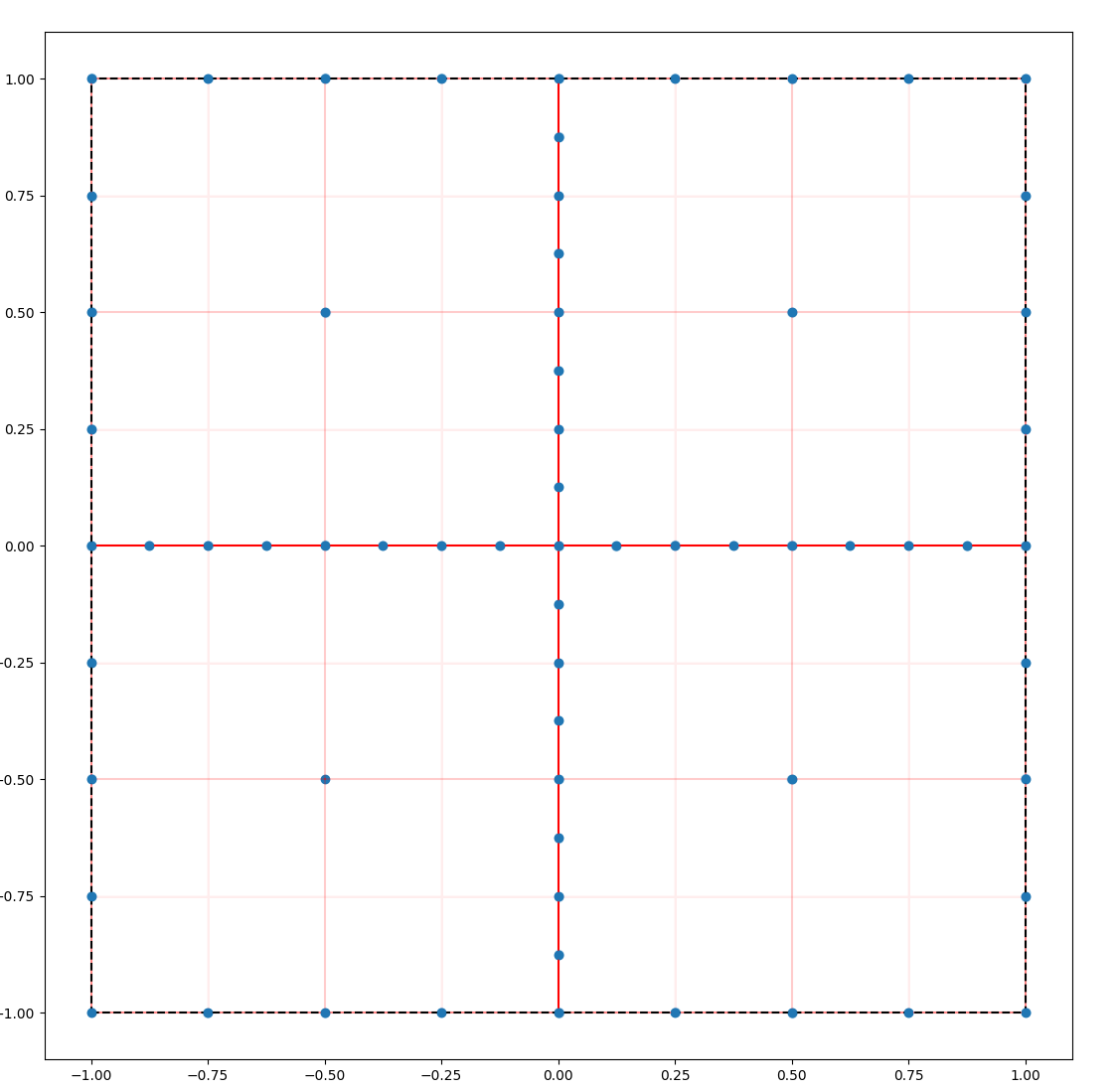}
    \quad 
    \includegraphics[width=0.35\textwidth]{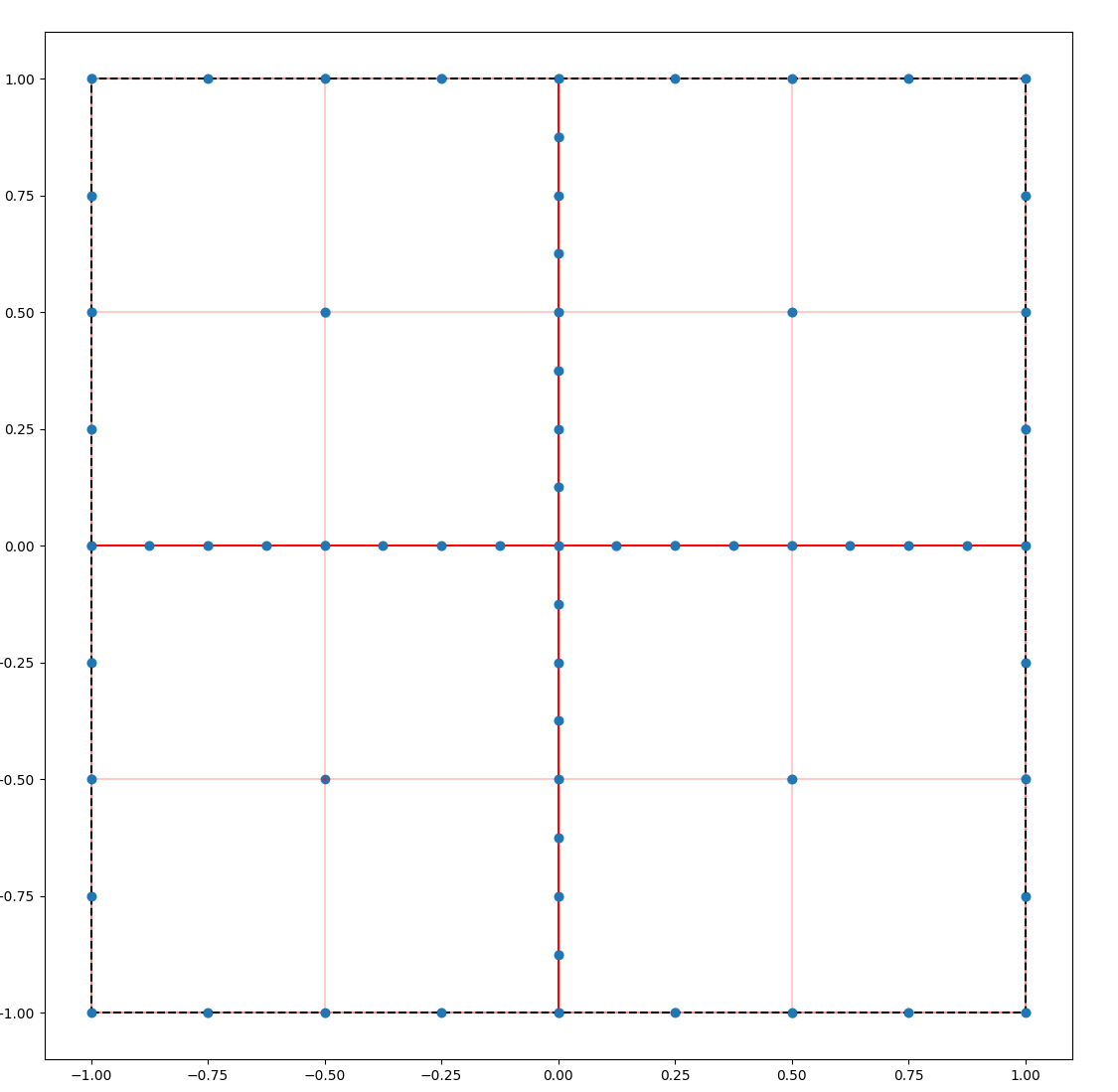}
    \caption{Building a sparse grid graph based on the sparse grid $\mathcal{S}$ in \Cref{fig:sg_cross_2d_noedges}. \emph{Left.} Raw graph obtained connecting the aligned and consecutive points of the sparse grid. \emph{Right.} Final graph obtained removing from the raw one the edges that intersect an edge of length equal or shorter. The edge opacity depends on the edge weights (see \Cref{def:wgraph_sg}).}
    \label{fig:graph_from_sg_firstexample}
\end{figure}

From now on, overloading notation, we will identify the line segment $\overline{(\v{x}_i,\v{x}_j)}$ with the corresponding edge $\{\v{x}_i,\v{x}_j\}$ in the SGG. Then, the edges of a SGG correspond to line segments in $\R^n$ with {possibly} different lengths. Therefore, it is natural to exploit the length of line segments to assign weights to edges.

Here, we use the strategy of setting a weight inversely proportional to the distance in $\R^n$ between the points defining the edges. A weighted version of SGGs will be useful for improving the training of discontinuity detectors based on Graph-Instructed NNs.

\begin{definition}[Weighted Sparse Grid Graph]\label{def:wgraph_sg}
Let $G=(\mathcal{S}, E)$ be a sparse grid graph based on a sparse grid $\mathcal{S}$ in $\R^n$; let $\ell\in\R$ be the length of the shortest segment among the ones defined by edges of $G$, i.e.:
\begin{equation}\label{eq:ell_min}
    \ell:=\min_{\{\v{x}_i,\v{x}_j\}\in E} \norm{\v{x}_i - \v{x}_j} \,.
\end{equation}
Then, $G$ is a \emph{weighted sparse grid graph} (WSGG) based on $\mathcal{S}$ if the edge weights are defined by
\begin{equation}\label{edgeweight_general}
    \omega_{ij}:=\frac{\ell}{\norm{\v{x}_i - \v{x}_j}}\in(0, 1]\,,
\end{equation}
for each edge $\{\v{x}_i, \v{x}_j\}\in E$.
\end{definition}

\quad

We remark that the edge weights introduced in \eqref{edgeweight_general} are relative quantities, depending on the relative distance of the points in the sparse grid with respect to $\ell$, and $\ell$ is related to the grid refinement level. Therefore, the WSGG of two similar sparse grids only differs for the set of vertices.
This property is summarized in the following proposition.

\begin{proposition}\label{prop:weghts_constant}
Let $\mathcal{S}'$ and $\mathcal{S}''$ be two similar sparse grids in $\R^n$ and let $G'=(\mathcal{S}', E'), G''=(\mathcal{S}'', E'')$ be the corresponding weighted sparse grid graphs. 
Then, $G'$ and $G''$ have the same adjacency matrix.
\end{proposition}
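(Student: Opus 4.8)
The plan is to exploit the fact that the similarity relation $\v{x}'_i = a\,\v{x}''_i + b$ is realized by a single affine map $T(\v{x}) = a\v{x} + b$—a uniform scaling of ratio $|a|$ composed with a translation—which sends $\mathcal{S}''$ bijectively onto $\mathcal{S}'$ while respecting the common indexing $i \mapsto i$. Since the adjacency matrix of a WSGG is the $N \times N$ matrix whose $(i,j)$ entry equals $\omega_{ij}$ when $\{i,j\}$ is an edge and $0$ otherwise, it suffices to prove two claims: (a) the edge supports coincide, i.e.\ $\{\v{x}'_i,\v{x}'_j\}\in E'$ if and only if $\{\v{x}''_i,\v{x}''_j\}\in E''$; and (b) whenever $\{i,j\}$ is an edge, the two weights agree, $\omega'_{ij}=\omega''_{ij}$.

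For claim (a), I would verify that each of the three defining conditions of \Cref{def:graph_sg} is invariant under $T$. Writing $\v{x}'_p - \v{x}'_q = a(\v{x}''_p - \v{x}''_q)$ for every pair, condition (i) is immediate, because multiplication by the nonzero scalar $a$ does not change which coordinate differences vanish, so axis-alignment along the same axis is preserved. Condition (ii) follows from the fact that $T$, being affine and bijective, preserves collinearity and betweenness; hence a grid point lies strictly inside $\overline{(\v{x}''_i,\v{x}''_j)}$ exactly when its image lies strictly inside $\overline{(\v{x}'_i,\v{x}'_j)}$. For condition (iii), I would use that $\norm{\v{x}'_p - \v{x}'_q} = |a|\,\norm{\v{x}''_p - \v{x}''_q}$ for all pairs, so strict length inequalities are preserved, together with the fact that $T$ preserves orthogonality and nonempty intersection of segments; the relevant family of perpendicular, intersecting edges is therefore mapped onto the corresponding family, and the comparison in (iii) survives unchanged.

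For claim (b), once the edge supports are known to coincide, I would note that the minimum edge length transforms by the same factor, $\ell' = |a|\,\ell''$, because the minimum in \eqref{eq:ell_min} is taken over the (now identified) common edge set and every length scales by $|a|$. Substituting into \eqref{edgeweight_general} gives
\begin{equation*}
\omega'_{ij}=\frac{\ell'}{\norm{\v{x}'_i - \v{x}'_j}}=\frac{|a|\,\ell''}{|a|\,\norm{\v{x}''_i - \v{x}''_j}}=\frac{\ell''}{\norm{\v{x}''_i - \v{x}''_j}}=\omega''_{ij}\,,
\end{equation*}
so the factor $|a|$ cancels and the weights coincide on every edge. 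Combining (a) and (b) yields $A'=A''$ entrywise, which is the claim.

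The step I expect to require the most care is condition (iii) of \Cref{def:graph_sg}, since it is the only one referring to geometric predicates—orthogonality and segment intersection—beyond mere collinearity and distance. The argument rests on the elementary but essential observation that $T$ is a similarity transformation, hence preserves angles (in particular perpendicularity) and maps intersecting segments to intersecting segments; the sign of $a$ is irrelevant here, as a negative $a$ only introduces a point reflection, which leaves all angles and incidences intact.
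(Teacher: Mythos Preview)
Your proof is correct and follows essentially the same approach as the paper: first show that the edge sets coincide (the paper simply asserts this as an immediate consequence of $\mathcal{S}'\simeq\mathcal{S}''$, whereas you carefully verify each clause of \Cref{def:graph_sg} under the affine map $T$), then observe that the common scaling factor $|a|$ cancels in the weight formula \eqref{edgeweight_general}. Your treatment of claim~(a), in particular the check of condition~(iii), is more thorough than the paper's, which takes the sparsity-pattern coincidence for granted.
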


\begin{proof}
The proof is almost straightforward. Let $A'=(A_{ij}'), A''=(A_{ij}'')\in\R^{N\times N}$ be the adjacency matrices of $G',G''$, respectively. Since $\mathcal{S}'\simeq\mathcal{S}''$, $A'$ and $A''$ have the same sparsity pattern, i.e.: 
\begin{equation*}
    A_{ij}' \neq 0 \Longleftrightarrow A_{ij}'' \neq 0\,, \ \forall \ i,j\in\{1,\ldots ,N\}\,.
\end{equation*}

Moreover, for each $i,j\in\{1,\ldots ,N\}$ such that $\{\v{x}'_i,\v{x}'_j\}\in E'$ and $\{\v{x}''_i,\v{x}''_j\}\in E''$, from equation \eqref{eq:sg_similarity} we have that
\begin{equation*}
    \norm{\v{x}'_i-\v{x}'_j} = \norm{a\,\v{x}''_i + b - a\,\v{x}''_j - b} = |a|\norm{\v{x}''_i - \v{x}''_j}\,;
\end{equation*}
therefore, the corresponding
non-zero elements of the adjacency matrices are equal, because

\begin{equation}
    A'_{ij}=\omega_{ij}' = \frac{\ell'}{\norm{\v{x}'_i - \v{x}'_j}} = \frac{|a|\,\ell''}{|a|\norm{\v{x}''_i - \v{x}''_j}} = \omega_{ij}''=A''_{ij}\,.
\end{equation}
\end{proof}

\begin{definition}[Sparse Grid Graph Similarity]\label{def:sg_graph_similarity}
Let $G',G''$ be two sparse grid graphs (weighted or not) based on two sparse grids $\mathcal{S}',\mathcal{S}''$, respectively. Then, $G'$ and $G''$ are similar ($G'\simeq G''$) if $\mathcal{S}'\simeq \mathcal{S}''$.
\end{definition}

\quad

\begin{remark}[Weights for graphs based on equispaced sparse grids with hypercubic box]\label{rem:weights_hypercubic_nested_cells}
It is easily seen that if $\mathcal{S}$ is a sparse grid characterized by equispaced collocation knots and by a hypercubic box of edge $\Lambda$ (i.e., $\beta_1-\alpha_1=\cdots=\beta_n - \alpha_n=\Lambda$) then, for each edge $e_{ij}=\{\v{x}_i,\v{x}_j\}$, there exists $d_{ij}\in\N$ such that the segment length is $\Lambda/2^{d_{ij}}$. Let $h_{\max} > 1$ be the maximum refinement level of $\mathcal{S}$ and $M:=m(h_{\max}) - 1=2^{h_{\max}-1}$ (see \eqref{eq:ltk_func_doubling}); then, $d_{ij}\leq h_{\max}-1$, $\ell=\Lambda / M$, and $\omega_{ij} = 2^{(d_{ij} - (h_{\max} - 1))}\in (0, 1\, ]$.
\end{remark}


\section{Discontinuity Detection and Sparse Grids}\label{sec:disc_det}

The discontinuity detection method proposed herein aims to extend the one proposed in \cite{Wang2022} moving from uniform grids to sparse grids. In view of this extension, we seek for a NN architecture leveraging sparse grids; thus, we consider the use of Graph-Instructed Neural Networks (GINNs) built with respect to a SGG.
We remark that the method proposed herein is designed in such a way that also classic NNs, such as Multi-Layer Perceptrons (MLPs), can be used; however, the numerical experiments show that the method benefits from the use of GINN (see \Cref{sec:num_exp}); this is in line with the results presented in \cite{GINN}.

For a given function $g : \Omega\subseteq\R^n \to \R$, the method proposed relies on the following ingredients:
\begin{itemize}
    \item the concept of {\em trouble points} of a sparse grid (i.e, points of the grid that are near to a discontinuity interface of $g$);
    \item the availability of a so-called {\em discontinuity detector}, i.e. a function or algorithm that returns the troubled points. 
\end{itemize}
The algorithm defined in this work does not depend on the nature of the  discontinuity detector used, and the function $g$ can be rather general;  however, the algorithm works better with discontinuous functions characterized by discontinuity interfaces with small codimensions (with respect to $\R^n\supseteq\Omega$).
In the following, we formally introduce the notions of troubled point and discontinuity detector based on a sparse grid. The definition of troubled points is inspired by the definition of troubled cells given in \cite{Wang2022}. 

\begin{definition}[Troubled Points]\label{def:troubled_point}
Let $\mathcal{S}$ be a sparse grid in $\R^n$ and let $G=(\mathcal{S}, E)$ be the (possibly weighted) graph based on $\mathcal{S}$. Let $g:\Omega\subseteq\R^n\rightarrow\R$ be a given function such that $\mathbb{B}(\mathcal{S})\subseteq\Omega$ and let $\delta$ be the set of all and only the discontinuity points of $g$. Then, the node $\v{x}_i$ of $\mathcal{S}$ is said to be a troubled point with respect to $g$ if there exist $e_{ij}=\{\v{x}_i,\v{x}_j\}\in E$ such that:
\begin{itemize}
    \item $\delta \cap \overline{(\v{x}_i,\v{x}_j)}\neq\emptyset\,$;
    \item it holds 
    \begin{equation}\label{eq:sg_ginn_pi_2}
        \norm{\v{x}_i - \v{x}^\delta_i} \leq \norm{\v{x}^\delta_i - \v{x}_j}\,,
    \end{equation}
    where $\v{x}^\delta_i$ is the point in $\delta \cap e_{ij}$ nearest to $\v{x}_i$.
\end{itemize}
\end{definition}

\quad

In other words, $\v{x}_i \in \mathcal{S}$ is a troubled point if $\delta$ intersects at least one edge $\{\v{x}_i,\v{x}_j\}\in E$ in a point between $\v{x}_i$ and the mid-point of the line segment corresponding to the edge.

\begin{definition}[Discontinuity Detector]\label{def:disc_det}
Let $\mathcal{S}$ be a sparse grid in $\R^n$ made of $N$ points and let $\mathcal{S}_{/\simeq}$ denotes the set of all the sparse grids similar to $\mathcal{S}$. Let $\mathcal{F}$ denotes the set of all the functions $g:\Omega\rightarrow\R$, for an arbitrary $\Omega\subseteq\R^n$. Then, a \emph{discontinuity detector} based on $\mathcal{S}$ is a function $\Delta$ that, for each  $\mathcal{S}'\simeq \mathcal{S}$ and each function $g\in\mathcal{F}$, returns a vector $\v{p}=(p_1,\ldots ,p_N)$, with $0 \leq  p_i \leq 1$ for all $i=1,\ldots, N$, where $p_i$ gives an indication of how likely $\v{x}_i'\in\mathcal{S}'$ is a troubled point. The closer $p_i$ is to $1$, the more likely $\v{x}'_i$ is a troubled point.
\end{definition}

\quad

According to \Cref{def:disc_det}, a discontinuity detector provides, for each grid point, a sort of likelihood for being a troubled point. A point $\v{x}_i'$ is labeled as troubled if $p_i\geq \tau$, where $\tau\in(0,1]$ is a given threshold.

The previous definition is intentionally qualitative, as several types of discontinuity detectors can be defined. For example, we can define detectors that evaluate $\v{p}$ knowing: $i$) the function $g$ and the coordinates of the points in $\mathcal{S}'$; $ii$) the {values of $g$ taken} at the points in $\mathcal{S}'$ and the points' coordinates; $iii$) the {values of $g$ taken} at the points in $\mathcal{S}'$, only. In this work, we focus on building NN-based detectors of the latter type; i.e., detectors $\Delta$ such that $\Delta : (\R\cup\{\infty\})^N \rightarrow [0,1]^N$ and
        \begin{equation}\label{eq:sg_discdet}
            \Delta\left(
            \v{g}'
            \right)
            =
            \Delta\left(g(\v{x}_1'),\ldots ,g(\v{x}_N')\right)
            =
            \v{p}
            \,.
        \end{equation}
By convention, we set $g(\v{x}_i')=\infty$ and $p_i=0$, if $\v{x}_i'\not\in\Omega$

Based on the previous definition, we also introduce the concept of {\em exact} discontinuity detector based on a sparse grid.

\begin{definition}[Exact Discontinuity Detector]\label{def:exact_disc_det}
Let $\mathcal{S}$ be a sparse grid in $\R^n$ made of $N$ points. An \emph{exact discontinuity detector} based on $\mathcal{S}$ is a discontinuity detector $\Delta^*$ that for each $\mathcal{S}' \simeq \mathcal{S}$, and for each given function $g: \R^n \to \R$, returns a vector $\v{p}\in\{0,1\}^N$ such that, for $i=1,\ldots ,N$,
$$p_i=\begin{cases}
  1 & \text{ if $\v{x}_i' \in \mathcal{S}'$ is a troubled point with respect to $\mathcal{S}'$ and $g$} \\
  0 & \text{ otherwise}.
\end{cases}$$
\end{definition}

\quad

A non-exact discontinuity detector will be called {\em inexact detector}.

Of course, given a sparse grid, an inexact detector returns a guess for the points to be troubled, whereas the former one detects them exactly. In \Cref{fig:sg_detectors} the different behavior between an exact and an inexact discontinuity detector is depicted. In this example, for the inexact detector, a point $\v{x}_i$ is labeled as a troubled point if the corresponding value $p_i$ returned by the detector is $p_i \geq \tau=0.5$. From the example we can see that the inexact detector provides both false-positive troubled points (magenta circles) and false-negative troubled points (magenta dots).

However, an exact detector is typically defined through expensive algorithms and in general it is not a detector based on the discontinuous function evaluations only (as in  \eqref{eq:sg_discdet}). Therefore, given a sparse grid, our idea consists in building a non-expensive but effective discontinuity detector that is only-evaluations dependent, training a NN model for approximating the exact one; given such a kind of NN-based detector, we can use it with a sparse grid-based discontinuity detection algorithm, aiming to reach very good results at a consistently lower computational cost, even in domains of dimension $n > 3$.

\begin{figure}[htb]
    \centering
    \includegraphics[width=0.4\textwidth]{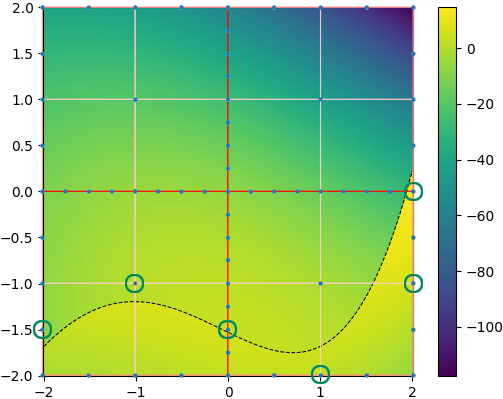}
    \quad
    \includegraphics[width=0.4\textwidth]{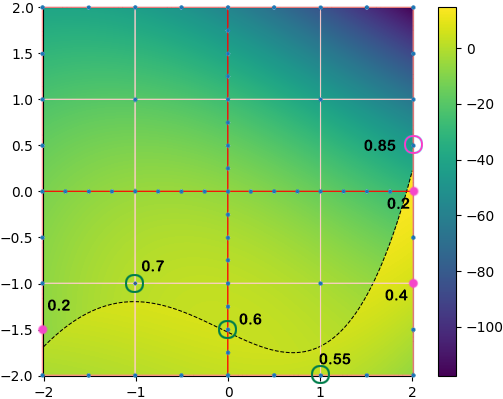}
    \caption{Example of sparse grid points classification made for a function $g:\R^2\to\R$ with an exact discontinuity detector $\Delta^*$ (left) and a inexact discontinuity detector $\Delta$ (right). The black dotted line is the function's discontinuity interface. Green circles are true troubled points, magenta circles are false troubled points, magenta dots are false non-troubled points. For the inexact detector, we report also the values $p_i$ corresponding to the highlighted sparse grid points (threshold $\tau=0.5$).}
    \label{fig:sg_detectors}
\end{figure}

\begin{remark}[Detectors independent of point coordinates]\label{rem:no_coord_detectors}
Similarly to what done in \cite{Wang2022}, the introduction of a NN-based detector that is only-evaluations dependent (see \eqref{eq:sg_discdet}) is useful to make the detector universally applicable to any sparse grid $\mathcal{S}'\simeq\mathcal{S}$ and any discontinuous function with domain dimension equal to $n$. On the other hand, since these detector will not be exact, we can expect that the smaller are the ratios between the function evaluations and the edge lengths (i.e., the slopes $|g(\v{x}_i') - g(\v{x}_j')|/\norm{\v{x}_i' - \v{x}_j'}$), the lower is the reliability of the predicted troubled points.
\end{remark}

\subsection{Algorithm for Discontinuity Detection}\label{sec:discdet_algorithm}

In \cite{Wang2022} the discontinuity detection method is characterized by a two-level approach:
\begin{enumerate}
    \item A first CNN detector is applied on a coarse uniform grid, identifying the coarse troubled cells;
    \item A second CNN detector is applied on a fine uniform grid defined on the coarse troubled cells, identifying the fine troubled cells.
\end{enumerate}

We adopt a similar approach for our sparse grid-based method. However, thanks to \Cref{prop:weghts_constant} and \Cref{rem:no_coord_detectors}, we can use the same detector $\Delta$ for both the coarse step and any refinement step. Therefore, we define an iterative approach that stops when a minimum box size for the sparse grid is reached. We point the reader to the fact that we can apply several refinements leveraging the use of sparse grids, which allow to save on the number of function evaluations, thus delaying the {\em curse of dimensionality} phenomenon.

Let us consider a sparse grid $\mathcal{S}$ built starting from equispaced knots on a hypercubic box; we recall that, for simplicity, we denote as \emph{equispaced sparse grids} the sparse grids built upon equispaced collocation knots. Given a discontinuity detector $\Delta$ based on $\mathcal{S}$, and given a discontinuous function $g:\Omega \to \R$, we sketch the basic version of a sparse grid-based discontinuity detection algorithm as follows (see \Cref{fig:algorithm_scheme}), while referring for the detailed version to the pseudo-code reported in \Cref{alg:sg_alg_detection}:
\begin{enumerate}
    \item place in $\Omega$ one or more initial sparse grids similar to $\mathcal{S}$ (equispaced, hypercubic box);
    \item using $\Delta$, detect the troubled points of one sparse grid and mark {the grid} as ``checked'';
    \item for each troubled point $\v{x}$, generate a new sparse grid $\mathcal{S}'$ similar to $\mathcal{S}$, with box $\mathbb{B}(\mathcal{S}')$ centered in $\v{x}$ and box's edge length equal to the length of the longest graph's edge intersecting the troubled point. If this latter length is shorter than a chosen value $\Lambda_{\min}\in\R_+$, do not create the new sparse grid.
    \item repeat steps 2 and 3, until all the sparse grids are ``checked''.
    \item the troubled points obtained with respect to the smallest sparse grids are the final troubled points detected by $\Delta$.
\end{enumerate}

\begin{algorithm}[htb!]
\caption{Sparse grid-based discontinuity detection (basic)}\label{alg:sg_alg_detection}
\begin{algorithmic}[1]
\Require \quad

$g:\Omega\subseteq\R^n\rightarrow\R$, function;

$\mathcal{S}$, sparse grid (equispaced, hypercubic box);

$\Delta$, discontinuity detector based on $\mathcal{S}$; 

$L=((\v{x}_1, \lambda_1),\ldots ,(\v{x}_m, \lambda_m))$, sequence of \emph{center} and \emph{edge length} pairs characterizing the sparse grid boxes $\mathbb{B}(\mathcal{S}^{(1)}),\ldots ,\mathbb{B}(\mathcal{S}^{(m)})\subset\Omega$ (typically a queue); 

$\Lambda_{\min}\in\R_+$, minimum edge length allowed for sparse grid boxes; 

$\tau\in (0,1]$, detection tolerance.

\Ensure \quad

$T\subset\R^n$, set of points detected as troubled points.

\State $T\gets \emptyset$
\State $C\gets\emptyset$ \Comment{set of ``visited'' center-edge pairs}
\While{$L$ is not empty}
    \State $(\v{x},\lambda)\gets$ get element of $L$ \Comment{typically the first one, if $L$ is a queue}
    \State $C\gets C\cup\{(\v{x},\lambda)\}$
    \State $D'\gets [x_1-\lambda/2,x_1+\lambda/2] \times \cdots \times [x_n-\lambda/2,x_n+\lambda/2]$
    \State $G'=(V',E')\gets$ SGG (or WSGG) based on $\mathcal{S}'\simeq\mathcal{S}$, s.t. $\mathbb{B}(\mathcal{S}')=D'$
    \State $\v{p}\gets$ vector returned by $\Delta$, w.r.t. $\mathcal{S}'$ and $g$
    \For{$i=1,\ldots ,N$}
        \If{$p_i\geq\tau$}
            \State $\lambda\gets \max_{\{\v{x}'_i,\v{x}'_j\}\in E'}\norm{\v{x}_i'-\v{x}_j'}$
            \If{$\lambda \geq \Lambda_{\min}$}
                \If{$(\v{x}'_i,\lambda)\not \in C$}
                    \State insert $(\v{x}'_i,\lambda)$ in $L$ \Comment{typically as last element, if $L$ is a queue}
                \EndIf
            \Else
                \State $T\gets T\cup\{\v{x}_i'\}$
            \EndIf
        \EndIf
    \EndFor
\EndWhile
\end{algorithmic}
\end{algorithm}

\begin{figure}[htb]
    \centering
    \includegraphics[width=1.\textwidth]{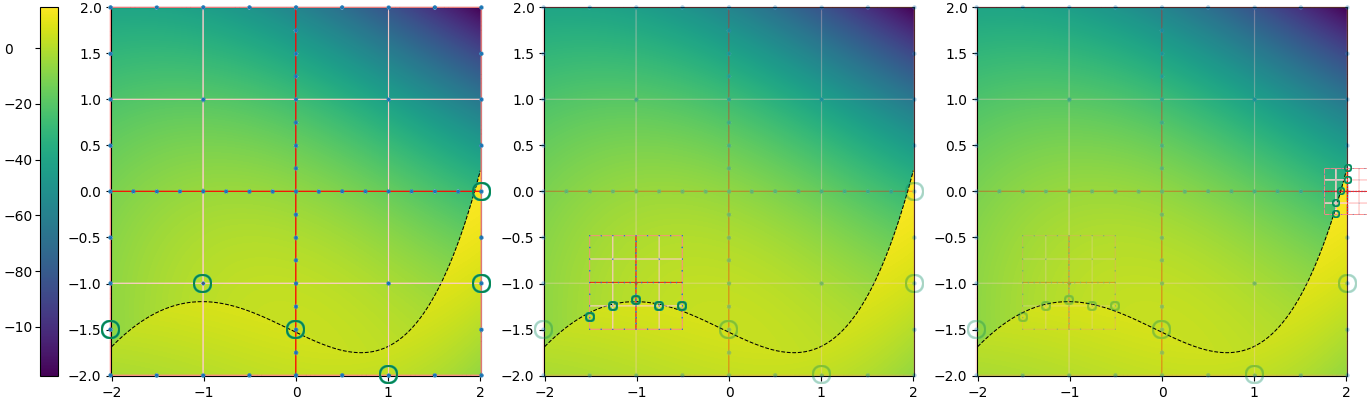}
    \caption{Example of three steps of \Cref{alg:sg_alg_detection}, assuming an exact detector $\Delta^*$ and a SGG as in \Cref{fig:graph_from_sg_firstexample}-right. Opaque green circles are the troubled points identified with the current grid. Transparent green circles are the troubled points identified at the previous steps and where the next sparse grids will be centered.
    }
    \label{fig:algorithm_scheme}
\end{figure}

\begin{remark}[Algorithm and non-hypercubic boxes]
In \Cref{alg:sg_alg_detection} the sparse grid are built on hypercubic boxes. This is not a restricting assumption, as the algorithm can be generalized resorting to sparse grid built on hyperrectangles, changing only the stopping criterion based on the box's edge length. For the sake of simplicity we keep the discussion focused on the case on sparse grids built upon hypercubic boxes.
\end{remark}

\begin{remark}[Equispaced sparse grids for optimized nested structure]
The algorithm exploits equispaced sparse grids for building a sort of nested structure that, in case of an exact discontinuity detector, can guarantee convergence to the discontinuity interfaces of $g$ (see \Cref{teo:alg_convergence}, next); moreover, this structure permits to save computational resources because it increases the probability that some points of a new sparse grid, generated at step 3, are shared with other sparse grids. In order to enhance this characteristic it is important that the boxes of the starting sparse grids are centered properly (i.e., according to the collocation knots criterium), if more than one is given at the first step of the algorithm. 
Theoretically, the algorithm can be extended also to non-equispaced sparse grids, but the generation of the new sparse grids (step 3) is not straightforward and is postponed to future work.
\end{remark}

\begin{remark}[Algorithm improvements]\label{rem:alg_improvements}
\Cref{alg:sg_alg_detection} represents the backbone of sparse grid-based algorithms for discontinuity detection. Then, depending on the user's needs, the algorithm can be changed and/or improved, e.g.:
\begin{itemize}
    \item a stopping criterion based on function evaluations can be introduced (useful for computationally expensive functions $g$);
    \item the function evaluation of the grid points can be stored and re-used;
    \item the troubled points' detection, for each box in the list $L$, can be optimized and/or parallelized.
\end{itemize}
We point the attention of the reader on the fact that $\Delta$ is independent of such a kind of improvements; then, it can be used with almost any modified version of \Cref{alg:sg_alg_detection}.
\end{remark}

It is worth remarking that any NN-based discontinuity detector is particularly suitable to be used with the algorithm for the troubled points predictions (see the last item in \Cref{rem:alg_improvements}); indeed, NNs can make predictions of thousands of inputs in the order of the second, exploiting the inner operations characterizing the NNs. 
In order to fully exploit the potential of NNs, we propose a modification of \Cref{alg:sg_alg_detection}. In particular, given NN-based detector $\Delta$, we can compute the vectors $\v{p}^{(1)},\ldots ,\v{p}^{(K)}\in\R^N$ corresponding to the points of the sparse grids $\mathcal{S}^{(1)},\ldots ,\mathcal{S}^{(K)}$, respectively, with just one NN batch-evaluation with respect to the vectors $\v{g}^{(1)},\ldots ,\v{g}^{(K)}$; i.e.,
\begin{equation}\label{eq:batch_p_pred}
    P = \Delta(\Gamma)\,,
\end{equation}
where 
\begin{equation}\label{eq:batch_p_pred_P_Gamma}
    P :=
    \begin{bmatrix}
        \v{p}^{(1)\,T}\\
        \vdots\\
        \v{p}^{(K)\,T}
    \end{bmatrix}
    =
    \left(p_i^{(k)}\right)_{\substack{k=1,\ldots ,K\\i=1,\ldots ,N}}
    \in\R^{K\times N}\,,
    \quad
    \Gamma :=
    \begin{bmatrix}
        \v{g}^{(1)\,T}\\
        \vdots\\
        \v{g}^{(K)\,T}
    \end{bmatrix}
    =
    \left(g(\v{x}_i^{(k)})\right)_{\substack{k=1,\ldots ,K\\i=1,\ldots ,N}}
    \in\R^{K\times N}\,.
\end{equation}
Therefore, the efficiency of the algorithm increases because we can evaluate entire batches of sparse grids at once and not {one at a time} (see lines 4-8 in \Cref{alg:sg_alg_detection}). 
This new NN-based method is detailed in \Cref{alg:sg_nn_alg_detection_parallel}.

\begin{algorithm}[htb!]
\caption{NN-based discontinuity detection}\label{alg:sg_nn_alg_detection_parallel}
\begin{algorithmic}[1]
\Require \quad

$g:\Omega\subseteq\R^n\rightarrow\R$, function; 

$\mathcal{S}$, sparse grid (equispaced, hypercubic box); 

$\Delta:\R^N\rightarrow[0,1]^N$, NN-based discontinuity detector (only-eval. dep.), based on $\mathcal{S}$; 

$L$, set of \emph{center} and \emph{edge length} pairs characterizing the boxes $\mathbb{B}(\mathcal{S}^{(1)}),\ldots ,\mathbb{B}(\mathcal{S}^{(m)})\subset\Omega$;  

$\Lambda_{\min}\in\R_+$, minimum edge length allowed for sparse grid boxes; 

$\tau\in (0,1]$, detection tolerance.

\Ensure \quad

$T\subset\R^n$, set of points detected as troubled points.

\State $T\gets \emptyset$
\State $C\gets\emptyset$ \Comment{set of ``visited'' center-edge pairs}
\While{$L$ is not empty}
    \State $\{G^{(1)},\ldots ,G^{(K)}\}\gets$ weighted graphs based on the sparse grids with boxes defined by $L$
    \State $C\gets C\cup L$
    \State $L\gets\emptyset$
    \State $\Gamma\in\R^{K\times N}\gets$ matrix $(g(\v{x}_i^{(k)}))$ \Comment{``batch of inputs'', see \eqref{eq:batch_p_pred_P_Gamma}}
    \State $P\in [0, 1]^{K\times N}\gets$ $\Delta(\Gamma)$ \Comment{``batch-predictions'' of $\Delta$, see \eqref{eq:batch_p_pred} and \eqref{eq:batch_p_pred_P_Gamma}}
    \For{$p_{ki}$ element of $P$}
        \If{$p_{ki}\geq \tau$}
            \State $\lambda\gets \max_{\{\v{x}_i^{(k)},\v{x}_j^{(k)}\}} \norm{\v{x}_i^{(k)} - \v{x}_j^{(k)}}$
            \If{$\lambda \geq \Lambda_{\min}$}
                \If{$(\v{x}_i^{(k)},\lambda)\not \in C$}
                    \State $L\gets L\cup\{(\v{x}_i^{(k)},\lambda)\}$
                \EndIf
            \Else
                \State $T\gets T\cup\{\v{x}_i^{(k)}\}$
            \EndIf
        \EndIf
    \EndFor
\EndWhile
\end{algorithmic}
\end{algorithm}

We conclude by observing the convergence properties of \Cref{alg:sg_alg_detection} under particular assumptions.

\begin{theorem}[Algorithm Convergence with Exact Detector]\label{teo:alg_convergence}
Let $\mathcal{S}$ be an equispaced sparse grid of $N>1$ points in $\R^n$ with minimum refinement level greater than one, for each dimension. Let $\Delta^*$ be an exact discontinuity detector based on $\mathcal{S}$. Let $\delta$ be the set of all and only the discontinuity points of a \emph{discontinuous} function $g:\Omega\subseteq\R^n\rightarrow\R$ and let $\mathcal{S}^{(1)},\ldots ,\mathcal{S}^{(m)}$ be the $m$ starting sparse grids of \Cref{alg:sg_alg_detection}.

Assume that there exists $j\in\{1,\ldots ,m\}$ such that the intersection of $\delta$ with the edges of the SGG based on $\mathcal{S}^{(j)}$ is not empty; then, for each $\varepsilon >0$, we have that \Cref{alg:sg_alg_detection}: 
\begin{enumerate}
    \item terminates in a finite number of steps.
    
    \item returns a set of troubled points $T$ such that 
    \begin{equation}\label{eq:alg_convergence}
        \mathrm{dist}(\v{x}^*, \delta) < \varepsilon\,,
    \end{equation}
    for each troubled point $\v{x}^*\in T$ detected, if $\Lambda_{\min}\leq 2\varepsilon$;
\end{enumerate}

\end{theorem}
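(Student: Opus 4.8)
The plan is to handle the two claims separately, but first to isolate the one geometric fact that drives the whole argument: on a hypercubic box of edge length $\lambda$, an equispaced sparse grid whose refinement level is at least two in every coordinate direction has \emph{all} of its SGG edges of length at most $\lambda/2$. I would establish this as a preliminary lemma. Each grid point $\v{x}$ lies in some tensor component $\mathcal{S}_{\v{h}}$ with $h_k\geq 2$ for all $k$; fixing the coordinates of $\v{x}$ in the directions $\neq k$, the univariate factor $\mathcal{K}_{k,h_k}$ already deposits points along the entire axis-parallel line through $\v{x}$ at spacing $\lambda/2^{h_k-1}\leq\lambda/2$, and the remaining tensor components can only add points, never enlarge a gap. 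Since an SGG edge joins consecutive aligned points (item \ref{SGG:rule_1} of \Cref{def:graph_sg}) and the length-filtering rule (item \ref{SGG:rule_2}) only \emph{removes} edges, every surviving edge has length $\leq\lambda/2$. This is precisely where the hypothesis ``minimum refinement level greater than one in each dimension'' enters, and it is the technical heart of the finiteness argument.

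Granting the lemma, finite termination (claim 1) follows from a bounded-depth, bounded-branching count on the recursion tree of sparse grids. At the refinement step of \Cref{alg:sg_alg_detection}, the child box spawned from a troubled point $\v{x}'_i$ has edge length equal to the longest SGG edge incident to $\v{x}'_i$, which by the lemma is at most half the current box edge. Hence along any branch the box edge shrinks by a factor of at least two per level, so a branch has depth at most $\lceil\log_2(\Lambda_0/\Lambda_{\min})\rceil$, where $\Lambda_0$ is the largest initial edge, before the guard $\lambda\geq\Lambda_{\min}$ fails and the point is committed to $T$ rather than refined. Each box carries the fixed number $N$ of points, so each node has at most $N$ children; the tree is therefore finite, and the set $C$ guarantees that no (center, edge) pair is expanded twice even when the same box is reached along different branches. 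Finitely many boxes, each handled with $O(N)$ work plus one detector evaluation, yield finite termination.

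For claim 2 I would argue directly from exactness and \Cref{def:troubled_point}. A point $\v{x}^*$ enters $T$ only when $\Delta^*$ flags it ($p_i\geq\tau$) and its longest incident edge satisfies $\lambda<\Lambda_{\min}$. Exactness forces $p_i=1$ to mean $\v{x}^*$ is a genuine troubled point, so there is an edge $\{\v{x}^*,\v{x}_j\}$ and a discontinuity point $\v{x}^\delta\in\delta$ on that edge with $\norm{\v{x}^*-\v{x}^\delta}\leq\norm{\v{x}^\delta-\v{x}_j}$; since $\v{x}^\delta$ lies on the segment, these combine to give $\norm{\v{x}^*-\v{x}^\delta}\leq\tfrac12\norm{\v{x}^*-\v{x}_j}\leq\tfrac12\lambda<\tfrac12\Lambda_{\min}\leq\varepsilon$ under the hypothesis $\Lambda_{\min}\leq2\varepsilon$. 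As $\v{x}^\delta\in\delta$, this yields $\mathrm{dist}(\v{x}^*,\delta)<\varepsilon$. Exactness is essential here: an inexact detector could place a false positive into $T$ with no nearby discontinuity, voiding the estimate.

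Finally, I would verify that the stated hypothesis makes the conclusion non-vacuous, i.e. that $T\neq\emptyset$. If $\delta$ meets an SGG edge $e_{ij}$ of $\mathcal{S}^{(j)}$, then the intersection lies in the near half of $e_{ij}$ for at least one endpoint, so that endpoint is troubled and triggers a refinement. The delicate point is persistence: I would check that the discontinuity point, being within half the parent edge length of the new box center, still falls on an axis-parallel edge of the finer child grid (whose half-extent equals half the parent edge and whose spacing is no larger), so that a troubled point survives at every level until the edge length drops below $\Lambda_{\min}$ and a point is committed to $T$. I expect the main obstacle to be exactly this persistence-of-troubled-points argument together with the edge-length lemma, since both demand careful bookkeeping of the nested equispaced structure; the accuracy bound in claim 2, by contrast, is immediate once exactness is invoked.
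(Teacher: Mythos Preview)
Your proposal is correct and follows essentially the same route as the paper: both arguments rest on the edge-length bound $\lambda_i\leq\Lambda/2$ (your lemma, the paper's inequality \eqref{eq:lambdas_ineq01b}), deduce geometric halving of box sizes and hence bounded recursion depth for finite termination, and obtain the accuracy estimate directly from exactness plus \Cref{def:troubled_point} exactly as you do. Your presentation is somewhat more explicit---you isolate the lemma, frame termination via a bounded tree, and flag persistence/non-vacuousness as the delicate point---whereas the paper interleaves the two claims by tracking one branch of refinements, but the mathematical content is the same.
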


\begin{proof}
Without loss of generality, we assume to start the algorithm with a unique sparse grid $\mathcal{S}$. Let $\Lambda$ denote the edge length of the sparse grid box $\mathbb{B}(\mathcal{S})$. Let $e_{ij}=\{\v{x}_i,\v{x}_j\}$ be an edge of the SGG based on $\mathcal{S}$ that intersects $\delta$ and, without loss of generality, let us assume that $\v{x}_i$ is a troubled point (see \Cref{def:troubled_point}); we denote by $\v{x}_{\delta}$ the point in $\delta\cap e_{ij}$ nearest to $\v{x}_i$. Let $\lambda_i$ be the length of the longest edge with $\v{x}_i$ as vertex; then, it holds 
\begin{equation}\label{eq:lambdas_ineq01}
\norm{\v{x}_i - \v{x}_\delta} \leq \frac{\norm{\v{x}_i - \v{x}_j}}{2} \leq \frac{\lambda_i}{2}.
\end{equation}
Since we assume that at least one refinement is made for each dimension, and since we use equispaced nodes as initial univariate distribution, we have $\lambda_i \leq \Lambda/2$; hence we can rewrite \eqref{eq:lambdas_ineq01} as 
\begin{equation}\label{eq:lambdas_ineq01b}
\norm{\v{x}_i - \v{x}_\delta} \leq \frac{\Lambda}{4}\,.
\end{equation}

According to \Cref{alg:sg_alg_detection}, two scenarios are possible: ($i$) $\lambda_i < \Lambda_{\min}$, then stop and assign $\v{x}_i$ to $T$; ($ii$) $\lambda_i \geq \Lambda_{\min}$, then refine. 

If case ($i$) occurs, \eqref{eq:alg_convergence} holds true because 
$$\mathrm{dist}(\v{x}_i,\delta)\leq\norm{\v{x}_i - \v{x}_\delta}\leq \lambda_i/2 < \Lambda_{\min}/2 \,\leq\, \varepsilon.$$
If case ($ii$) occurs, consider the sparse grid $\mathcal{S}'$ introduced in the refinement step, centered on $\v{x}_i$ and with box edge length $\Lambda' = \lambda_i$.
Let $e'_{pq}=\{\v{x}'_p,\v{x}'_q\}$ be an edge of the SGG based on $\mathcal{S}'$ that intersects $\delta$ such that, without loss of generality, $\v{x}'_p$ is a troubled point; we denote by $\v{x}'_{\delta}$ the point in $\delta\cap e'_{pq}$ nearest to $\v{x}'_p$. Let $\lambda'_{p}$ be the length of the longest edge with $\v{x}'_{p}$ as vertex; then, it holds 
\begin{equation}\label{eq:lambdas_ineq_02}
\norm{\v{x}'_{p} - \v{x}'_\delta} \leq \frac{\lambda'_{p}}{2} \leq \frac{\Lambda'}{4} \leq \frac{\Lambda}{8}
\end{equation}
where the third inequality is due to the fact that ${\Lambda'} \leq \frac{\Lambda}{2}$ because at least one refinement is made, and the second inequality applies because ${\lambda'_p} \leq \frac{\Lambda'}{2}$, as any edge length is at least half of the sparse grid box.

Repeating the same reasoning, we obtain after $k$ refinement steps, $k\geq 1$, the following inequalities:
\begin{equation}\label{eq:lambdas_ineq_general}
\norm{\v{x}^{(k)} - \v{x}^{(k)}_\delta} \leq \frac{\lambda^{(k)}}{2} \leq \frac{\Lambda^{(k)}}{4} \leq \frac{\Lambda}{2^{k + 2}} \quad \text{and} \quad \lambda^{(k)}\leq \frac{\Lambda}{2^{k+1}}\,,
\end{equation}
where the subscript for  $\v{x}^{(k)},\lambda^{(k)}$ is omitted for the ease of notation.

For a fixed $\Lambda_{\min} > 0$, let $\kappa  \in \N, \kappa \geq 0$, be the smallest integer such that  $\Lambda/(2^{\kappa +1})<\Lambda_{\min}$; then, $\lambda^{(\kappa)} \leq \frac{\Lambda}{2^{\kappa +1}}< \Lambda_{\min}$ and the algorithm has a finite termination. Moreover, it holds
\begin{equation*}\label{eq:alg_convergence_hth_level}
    \mathrm{dist}(\v{x}^{(\kappa)}, \delta) \leq \norm{\v{x}^{(\kappa)} - \v{x}^{(\kappa)}_\delta} \leq \frac{\lambda^{(\kappa)}}{2} < \frac{\Lambda_{\min}}{2} \leq \varepsilon
\end{equation*}
and thus the thesis follows.

\end{proof}

We conclude this subsection with an observation related to the potential versatility of the sparse grid-based discontinuity detectors.

\begin{remark}[Detectors and space dimension]\label{rem:disc_detectors_dimensions}
A discontinuity detector based on a sparse grid $\mathcal{S}$ in $\R^n$ can also be used for detecting discontinuities of functions $g':\Omega'\subseteq\R^{n'}\rightarrow \R$, with $n'<n$. Indeed, we can define a function $g:\Omega' \times \R^{n-n'} \subseteq\R^n\rightarrow\R$ as
$$g(x_1,\ldots ,x_{n'}, x_{n'+1},\ldots ,x_n) := g'(x_1,\ldots ,x_{n'}),$$
for each $\v{x}\in\Omega' \times \R^{n-n'}$, and use the detector with respect to $g$. Moreover, in this case \Cref{alg:sg_alg_detection} can be modified to ignore any detected troubled point with at least one coordinate $x_{n'+1},\ldots , x_n\neq 0$, assuming to start with sparse grids all centered in points with coordinates $x_{n'+1}=\cdots=x_n = 0$. The study of such a kind of generalization is postponed to future work.
\end{remark}

\section{NN-based Discontinuity Detectors}\label{sec:NN_discontinuity_det}

In this section we describe the method used for building a NN-based discontinuity detector to be used with \Cref{alg:sg_nn_alg_detection_parallel}. We recall that each discontinuity detector is based on a fixed sparse grid $\mathcal{S}$ and it is designed to work only with similar sparse grids; then, a dataset built with respect to $\mathcal{S}$ can be used to train only NN-based detectors based on $\mathcal{S}$.

First, we will sketch the entire procedure in \Cref{sec:sketch_procedure_NNbuilding}. Then, in the following subsections, we will provide details about the loss function and NN models. We refer the reader to \ref{sec:building_details} for further details related to the building of a deterministic approximation of an exact detector (\ref{sec:zerolevdet}), the creation of synthetic data for the training (\ref{sec:synth_data_creation}), data preprocessing (\ref{sec:data_prep}), and NN architecture archetypes (\ref{sec:NNarch_archetypes}).

\subsection{Building Procedure for NN-based Detectors}\label{sec:sketch_procedure_NNbuilding}

The construction of a NN-based detector is based on the following steps:
\begin{enumerate}
    \item Choose a sparse grid $\mathcal{S}$ in $\R^n$. Let $N$ denote the number of points in $\mathcal{S}$.
    
    \item\label{item:random_discfuncs} Generate a set of random piece-wise continuous functions $\mathcal{G}=\{g^{(1)},\ldots ,g^{(Q)}\}$ with known discontinuities, typically contained in the zero-level sets $\zeta^{(q)}$ of a function $f^{(q)}$, for each $q=1,\ldots ,Q$;
    
    \item\label{item:zlc_det_dataset_01} For each $g^{(q)}\in\mathcal{G}$, run \Cref{alg:sg_alg_detection} with {either an exact detector $\Delta^*$ or a deterministic approximation $\Delta$ of $\Delta^*$} (see \ref{sec:zerolevdet} for details); then, at each detection step with respect to {a} sparse grid $\mathcal{S}'\simeq\mathcal{S}$, {compute}:
    \begin{itemize}
        \item the evaluations of $g^{(q)}$ in the sparse grid points, i.e. the vector $\v{g}'=(g^{(q)}(\v{x}_1'),\ldots ,g^{(q)}(\v{x}_N'))$;
        \item the current value of $\v{p} = \Delta(\v{g}')$.
    \end{itemize}
    
    \item\label{item:zlc_det_dataset_02} Create a dataset of pairs $(\v{g}',\v{p})$ {collecting} the vectors {obtained} at the previous {step};
    
    \item Split the dataset into training, validation, and test set (denoted by $\mathcal{T}$, $\mathcal{V}$, and $\mathcal{P}$, respectively);
    
    \item\label{item:NNtraining_step} Given a NN with characterizing function $\widehat{\Delta}:\R^N\rightarrow [0, 1]^N$, train {the model} with respect to $\mathcal{T}$ and $\mathcal{V}$ to obtain {an approximation of} $\Delta$ and, as a consequence, the exact discontinuity detector $\Delta^*$; i.e.:
    \begin{equation*}\label{eq:NN_apprx_exactdiscdet}
        \widehat{\Delta}(\v{g}')\approx\Delta^*(\v{g}')\,,
    \end{equation*}
    for each {vector $\v{g}'$ of function evaluations at the grid points of $\mathcal{S}'$}.
    
    \item Measure the performance of the trained NN with respect to $\mathcal{P}$.
    
\end{enumerate}

The most expensive part of this procedure is item \ref{item:zlc_det_dataset_01}, because it involves a deterministic (approximation of) $\Delta^*$; typically, this detector is expensive, especially when we have a high space's dimension $n$ (i.e., $n\gg 1$).

\subsection{Loss Function}\label{sec:loss_func}

Concerning the loss function for training a NN-based detector, several choices are possible. In \cite{Wang2022}, the Mean Squared Error is used as loss function for training the convolutional detector. However, since the target vectors $\v{p}$ have values in $\{0,1\}$ (non-trouble point and troubled point, respectively), we prefer to use the loss function
\begin{equation}\label{eq:my_loss_func}
    \mathcal{L}(\mathcal{B}) = \frac{1}{|\mathcal{B}|}\sum_{(\v{g}',\v{p})} \sum_{i=1}^N -{\mu_1} p_i \log(\widehat{p}_i) - {\mu_0} (1 - p_i) \log(1-\widehat{p}_i)\,,
\end{equation}
for any batch $\mathcal{B}$ of input-output data, and where $\widehat{\v{p}}:=
\widehat{\Delta}(\v{g}')\in [0,1]^N$; namely, $\mathcal{L}$ is the average sum of the component-wise application of the binary cross-entropy loss \cite{BinaryCrossEntropy_TF, Goodfellow-et-al-2016}. The hyper-parameters {$\mu_1$ and $\mu_0$} are used to {suitably weight} the identification of troubled points or non-troubled points, respectively.

\subsection{NN Models: MLPs and GINNs}\label{sec:NN_models}

Looking at step \ref{item:NNtraining_step} of the building procedure for NN-based detectors (\Cref{sec:sketch_procedure_NNbuilding}), we observe that the only practical restrictions for the NN model consist of using an input layer of $N$ units and an output layer of $N$ units with values in $[0,1]$. Therefore, any architecture is allowed for the hidden layers.

Since \Cref{alg:sg_alg_detection} is based on SGGs with the same adjacency matrix (see \Cref{prop:weghts_constant}), it is reasonable to train a NN model tailored for tasks defined on graph-structured data, where the graph is fixed and only the node features vary. In \cite{GINN}, Berrone et al. recently observed that a good choice for learning a target function $\v{F}:\R^N\rightarrow\R^N$, dependent on the adjacency matrix of a fixed graph $G$, is to use a Graph-Instructed Neural Network (GINN) because they perform better than Multi-Layer Perceptrons (MLPs) and classic Graph Neural Networks (GNNs). The GINN models are part of the wide family of spatial GNNs and have been specifically developed for such a kind of task \cite{GINN}. Indeed, GNNs are mainly designed for other tasks than learning $\v{F}$ (e.g., graph classification, semi-supervised learning on nodes, etc.), while MLPs do not exploit the graph's connections \cite{GNNsurvey2020, GINN,EWGINN}. We point the reader to the fact that in \cite{GINN} GI layers and GINNs are denoted as Graph-Informed layers and Graph-Informed NNs, respectively. In \cite{HALL2021110192}, in a different framework from the one addressed in \cite{GINN}, a homonymous but different model is presented; therefore, starting from the work \cite{EWGINN} of Della Santa et al., the authors changed the names of both layers and NNs, to avoid confusion with \cite{HALL2021110192}.

{Due to} the observations about MLPs and GINNs, we focus {here} on these architectures for building the NN-based detectors. Moreover, the numerical experiments in \Cref{sec:num_exp} confirm the observation of \cite{GINN}; i.e., we observe that GINN-based detectors perform better than MLP-based detectors.

\subsubsection{Graph-Instructed Neural Networks}\label{sec:GINNs}

A GINN is a NN characterized by Graph-Instructed (GI) Layers. These layers are defined by an alternative graph-convolution operation introduced in \cite{GINN}. 
In brief, from a message-passing point of view, this graph-convolution operation is equivalent to having each node $v_i$ of $G$ sending to its neighbors a message equal to the input feature $x_i$, scaled by a weight $w_i$; then, each node sum up all the messages received from the neighbors, add the bias, and applies the activation function. {We point the attention of the reader to the fact that, only in this subsection, $\v{x}$ denotes feature vectors and not the sparse grid's points.}

In a nutshell, the message-passing interpretation can be summarized by \Cref{fig:ginnfilter} and the following node-wise equation
\begin{equation}\label{eq:ginn_node_action}
x_{i}' = \sum_{j \in \mathrm{N}_{\text{in}}(i)\cup \{i\}} x_j \, w_j  + b_i\,,
\end{equation}
where $x_i'$ is the layer's output feature corresponding to node $v_i$ and $\mathrm{N}_{\rm in}(i)$ is the set of indices such that $j\in\mathrm{N}_{\rm in}(i)$ if and only if $e_{ij}=\{v_i,v_j\}$ is an edge of the graph. We dropped the action of the activation function for simplicity.

\begin{figure}[htb]
    \centering
    \resizebox{0.65\textwidth}{!}{
    \begin{tikzpicture}[multilayer=3d,rotate=90]
    \Vertices{Figures/ginnfilteraction/ginnfilter_verts_v2.csv}
    \Edges{Figures/ginnfilteraction/ginnfilter_edges_v2.csv}
    \end{tikzpicture}
	}
    \caption{Visual representation of \eqref{eq:ginn_node_action}. Example with $n=4$ nodes (non-directed graph), $i=1$; for simplicity, the bias is not illustrated.}
    \label{fig:ginnfilter}
\end{figure}

Starting from \eqref{eq:ginn_node_action}, we can formally define the GI layers through a compact formulation. Given a graph $G$ (without self-loops) and its adjacency matrix $A\in\R^{N\times N}$, the simplest version of GI layer for $G$ is a NN layer described by a function $\mathcal{L}^{GI}:\R^N\rightarrow\R^N$, with one input feature per node and one output feature per node, such that
\begin{equation}\label{eq:GI_action_simple}
    \begin{aligned}
    \mathcal{L}^{GI}(\v{x}) 
    = 
    \v{\psi}\left( \,(\,\mathrm{diag}(\v{w}) (A + \mathbb{I}_N)\,)^T\, \v{x} + \v{b}\right),
    \end{aligned}
    \,
\end{equation}
where $\v{x}\in\R^N$ denotes the vector of input features and:
\begin{itemize}
    \item $\v{w}\in\R^N$ is the weight vector, with the component $w_i$  associated to the graph node $v_i$, $i=1,\ldots , N$;

    \item $\mathrm{diag}(\v{w})\in\R^{N\times N}$ is the diagonal matrix with elements of $\v{w}$ on the diagonal and $\mathbb{I}_N\in\R^{N\times N}$ is the identity matrix. For future reference, we set $\widehat{W}:=\mathrm{diag}(\v{w}) (A + \mathbb{I}_N)$; {the diagonal matrix $\mathrm{diag}(\v{w})$ in \eqref{eq:GI_action_simple} is introduced only for describing in matrix form the multiplication of the $i$-th row of $(A + \mathbb{I}_N)$ by} the weight $w_i$ associated to node $v_i$, for each $i=1,\ldots ,N$;

    \item $\v{\psi}:\R^N\rightarrow\R^N$ is the element-wise application of the activation function $\psi$;
    
    \item $\v{b}\in\R^N$ is the bias vector.
\end{itemize}

From another point of view, equation \eqref{eq:GI_action_simple} is equivalent to the action of a ``constrained'' Fully-Connected (FC) layer where the weights are the same if the connection is outgoing from the same unit, whereas it is zero if two units correspond to graph nodes that are not connected (see \Cref{fig:GIasFC}); more precisely:
\begin{equation}\label{eq:GI_weights_simple}
    \widehat{w}_{ij}=
    \begin{cases}
    w_i\,,\quad & \text{if }a_{ij}\neq 0 \text{ or }i=j\\
    0\,,\quad & \text{otherwise}
    \end{cases}
    \,,
\end{equation}
where $a_{ij},\widehat{w}_{ij}$ denote the $(i,j)$-th element of $A, \widehat{W}$, respectively.

\begin{figure}[htb]
    \centering
    \subcaptionbox{FC representation of GI layer}{
        \resizebox{0.3\textwidth}{!}{
            \begin{tikzpicture}[x=1cm, y=1cm, >=stealth]


\node [circle,fill=white!50,minimum size=0.75cm] (hidden3-1) at (0,1.5) {$x_1$};
\node [] (w1) at (0.75,1.75) {{\color{cyan!50}$w_1$}};

\node [circle,fill=white!50,minimum size=0.75cm] (hidden3-2) at (0,0.5) {$x_2$};
\node [] (w2) at (0.7,0.8) {{\color{magenta!50}$w_2$}};

\node [circle,fill=white!50,minimum size=0.75cm] (hidden3-3) at (0,-0.5) {$x_3$};
\node [] (w3) at (0.7,-0.8) {{\color{orange!50}$w_3$}};

\node [circle,fill=white!50,minimum size=0.75cm] (hidden3-4) at (0,-1.5) {$x_4$};
\node [] (w4) at (0.75,-1.75) {{\color{green!50}$w_4$}};



\node [] (L) at (5,2.25) {$L^{GI}$};

\node [circle,fill=blue!30,minimum size=0.75cm] (output-1) at (5,1.5) {};

\node [circle,fill=blue!30,minimum size=0.75cm] (output-2) at (5,0.5) {};

\node [circle,fill=blue!30,minimum size=0.75cm] (output-3) at (5,-0.5) {};

\node [circle,fill=blue!30,minimum size=0.75cm] (output-4) at (5,-1.5) {};


\draw [->,color=cyan!100] (hidden3-1) -- (output-1);
\draw [->,color=cyan!100] (hidden3-1) -- (output-2);
\draw [->,color=cyan!100] (hidden3-1) -- (output-4);

\draw [->,color=magenta!100] (hidden3-2) -- (output-1);
\draw [->,color=magenta!100] (hidden3-2) -- (output-2);
\draw [->,color=magenta!100] (hidden3-2) -- (output-3);
\draw [->,color=magenta!100] (hidden3-2) -- (output-4);

\draw [->,color=orange!100] (hidden3-3) -- (output-2);
\draw [->,color=orange!100] (hidden3-3) -- (output-3);
\draw [->,color=orange!100] (hidden3-3) -- (output-4);

\draw [->,color=green!100] (hidden3-4) -- (output-1);
\draw [->,color=green!100] (hidden3-4) -- (output-2);
\draw [->,color=green!100] (hidden3-4) -- (output-3);
\draw [->,color=green!100] (hidden3-4) -- (output-4);

\end{tikzpicture}
            }
    }
    \subcaptionbox{Weight matrix of GI layer}{\input{Figures/GIlayer_WeightMatrix.tex}}
    \caption{Visual representation of a GI layer as a ``constrained'' FC layer (subfigure ($a$)), with weight matrix defined by \eqref{eq:GI_weights_simple} (subfigure ($b$)). This figure is based on the same graph illustrated in \Cref{fig:ginnfilter}}
    \label{fig:GIasFC}
\end{figure}

Layers characterized by \eqref{eq:GI_action_simple} can be generalized to read any arbitrary number $K\geq 1$ of input features per node and to return any arbitrary number $F\geq 1$ of output features per node.
Then, the action of a general GI layer is a function $\mathcal{L}^{GI}: \R^{N\times K}\rightarrow\R^{N\times F}$. Additionally, pooling and mask operations can be added. For more details about GI layers, see \cite{GINN,EWGINN,sparseGIlayers}.

Now, we point the attention of the reader to the fact that the GI layer definition is very general and does not ask for a non-weighted graph. Indeed, one of the main advantages of using a GI layer is that it exploits the zeros of the adjacency matrix $A$, ``pruning'' the connections between layer units according to the graph edges; therefore, the zero elements of $A$ are in the same position independently on the usage of weights for the graph edges.

The observation above is important to build GINNs with respect to the WSGG of a given sparse grid. In particular, we prefer a weighted graph because we want to emphasize the difference between connections according to the distance of the nodes in $\R^n$. Then, using a weighted graph for the GINN, we have that the action of the GINN's weights is rescaled with respect to the non-trainable edge weights $\omega_{ij}\in (0, 1]$ (see \Cref{def:wgraph_sg}).


\section{Numerical Experiments}\label{sec:num_exp}

In this section, we apply the proposed NN-based discontinuity detection method (see \Cref{alg:sg_nn_alg_detection_parallel}) on various test functions, both using MLP-based and GINN-based detectors (for details on architectures, see \ref{sec:NNarch_archetypes}). We recall that our NN-based detectors detect the troubled points reading only the function evaluations at the sparse grid's points; then, they can be applied to any sparse grid (similar to the one used for building the NN model) and to any discontinuous function with domain dimension equal to the one of the sparse grid (see \Cref{rem:no_coord_detectors}) or, {in principle}, with smaller domain dimension (see \Cref{rem:disc_detectors_dimensions}).

We show results for: $i$) dimension $n=2$, with application also to the boundary detection problem for images (see \Cref{sec:2dim_detection}); $ii$) dimension $n=4$, in order to prove the potentialities of the method for finding discontinuities also in dimensions higher than 3 (see \Cref{sec:4dim_detection}). In both case ($i$) and  case ($ii$), we show the results obtained using just one set of hyper-parameters and one training configuration of the NNs, and one sparse grid; these choices have been made after a preliminary exploration of the learning problem. However, we deserve to future work a study concerning the optimal choice of hyper-parameters, training options and sparse grid characteristics.

After the numerical experiments, we discuss the potentiality and the cost of the proposed method into a dedicated subsection (see \Cref{sec:proscons}).

\subsection{Two-dimensional Discontinuity Detection}\label{sec:2dim_detection}

We start the numerical experiments testing our method on 2-dimensional test functions; we will use the notation introduced after equation \eqref{eq:multiindex_rules} for describing the sparse grids. 

Let $\mathcal{S}_2$ be the 2-dimensional sparse grid $\mathcal{S}_2 :=\mathcal{I}_{\rm sum}(6)$ ($N=65$ points, see Figures \ref{fig:sg_cross_2d_noedges} and \ref{fig:graph_from_sg_firstexample}). Then, we randomly generate a set $\mathcal{G}_2$ of $Q=600$ piece-wise continuous functions characterized by a linear, spherical, and polynomial {discontinuity interface} identified by the zero-level set of given functions (see \ref{sec:synth_data_creation} for details); specifically, we have 200 functions for each type. Then, we build the synthetic dataset of pairs $(\v{g}',\v{p})$ from $\mathcal{G}_2$, running \Cref{alg:sg_alg_detection} and detecting the zero-level sets containing the discontinuity interfaces of the functions in $\mathcal{G}_2$ (we use {the} detector $Z^{(150)}$ described in \ref{sec:zerolevdet}). {From now on, for the sake of simplicity, we will also refer to the discontinuity interfaces as ``\emph{cuts}''.}

However, most of the detected/generated pairs $(\v{g}',\v{p})$ are characterized by null vectors $\v{p}$ (i.e., are continuous-region samples); they are in a number $D_0$ that is much larger than the number $D_1$ of samples with non-null vector $\v{p}$. Therefore, in order to have a more balanced dataset, we randomly keep $D_0'$ pairs out of $D_0$ and discard the others, where
\begin{equation*}\label{eq:kept_allzero_cases}
    D_0':=\max_{i=1,\ldots ,N}D_{1}^{(i)}\,,
\end{equation*}
and $D_{1}^{(i)}$ denotes the number of generated samples where $\v{p}$ has exactly $i$ non-null values. We denote by $\mathcal{D}_2$ the final dataset of $(\v{g}',\v{p})$ samples obtained.

Given $\mathcal{D}_2$, we build an MLP-based detector and a GINN-based detector and we train them with the following training options:
\begin{itemize}
    \item Dataset split: the test set cardinality $|\mathcal{P}_2|$ is $30\%$ of $|\mathcal{D}_2|$, the training set cardinality $|\mathcal{T}_2|$ is $80\%$ of $|\mathcal{D}_2|-|\mathcal{P}_2|$, and the validation set cardinality is $|\mathcal{V}_2|=|\mathcal{D}_2 |-|\mathcal{P}_2|-|\mathcal{T}_2|$;
    \item Loss function: loss \eqref{eq:my_loss_func}, with ${\mu_0}=0.5$ and ${\mu_1}=1.5$;
    \item Mini-batch size: 64;
    \item Optimizer: Adam \cite{Kingma2015_ADAM} (learning rate $\epsilon=0.001$, moment decay rates $\beta_1=0.9$, $\beta_2=0.999$);
    \item Learning rate decay: reduce on plateau \cite{ReduceLRPlateau_TF} (factor $0.75$, $7$ epochs of patience);
    \item Regularization: early-stopping method with best-weights restoration \cite{EarlyStopping_TF} (35 epochs of patience).
    
\end{itemize}
Concerning the NN-based detectors, we use a \emph{leaky relu} activation function \cite{SURVEYFACTIVATIONS_Apicella2021}, a \emph{Glorot Normal} initialization for the weights \cite{Glorot2010_GLOROTunifANDnormal}, and (for the GINN-based detector) $F=15$ filters for the hidden layers{; both MLPs and GINNs are characterized by layers of $N=|\mathcal{S}|$ units and by residual blocks (see \cite{He2016_ResidualNN}) for exploiting depth in the models, depth that is proportional to the diameter of the SGG. For more details about the model's architectures, see \ref{sec:NNarch_archetypes}}. 

At the end of the training, we measure the loss and, for a better understanding, the Mean Absolute Error (MAE) of these models on the test set $\mathcal{P}_2$ (see \Cref{tab:testset_perf_2dims}). Looking at the values in the table, we see that the GINN's prediction performances are slightly better than the MLP's ones, at a reasonable cost in terms of trainable parameters. In particular, there are more trainable parameters in the GINN model because {the GI hidden layers are endowed with $F=15$ filters; nonetheless, thanks to the sparsity of the adjacency matrix of the SGG, the number of trainable weights in the GINN is much smaller than $15$ times the number of MLP's weights.}

\begin{table}[htb!]
    \centering
    \begin{tabular}{|c|c||c|c|}
    \hline
        NN model & \# trainable param.s & Loss function value & MAE\\
        \hline
        \hline
        MLP & $52\,910$ & 4.8678 & 0.0480\\
        \hline
        GINN & $173\,880$ & 3.5396 & 0.0393\\
        \hline
    \end{tabular}
    \caption{Two-dimensional case. Performances on $\mathcal{P}_2$ of the MLP-based and GINN-based detectors.}
    \label{tab:testset_perf_2dims}
\end{table}

Now, we test the discontinuity detection abilities of these NN models if used as discontinuity detectors together with the sparse grid-based algorithm proposed (see \Cref{alg:sg_nn_alg_detection_parallel}). In particular, we test them on four piece-wise continuous functions defined on $\Omega_2=[-1,1]^2$ (see \Cref{fig:2D_testfuncs}): $i$) a function with circular cut; $ii$) a function with Legendre polynomial cut; $iii$) a function with sinusoidal cut; $iv$) a function with one elliptic cut and two bow-shaped cuts. We point the attention of the reader to the fact that functions ($i$) and ($ii$) are of the same type of functions used to generate the synthetic dataset $\mathcal{D}_2$. On the other hand, functions ($iii$) and ($iv$) are functions of a different type with respect to the ones in $\mathcal{G}_2$; then, they are somehow ``new'' to the NN models and good performances on these functions {witness} good generalization abilities of the NNs. Moreover, looking at \Cref{fig:2D_testfuncs}, we can observe that test functions ($i$), ($ii$), and ($iv$) have discontinuity interfaces characterized by parts where the discontinuity jump is approximately zero, i.e., where the interface is not of (evident) discontinuity; then, they are important to understand how much sensitive the NN-based detectors are in identifying discontinuities.

\begin{figure}[htb!]
    \centering
    \subcaptionbox{Function ($i$)}{\includegraphics[trim=25 20 50 25,clip,width=0.24\textwidth]{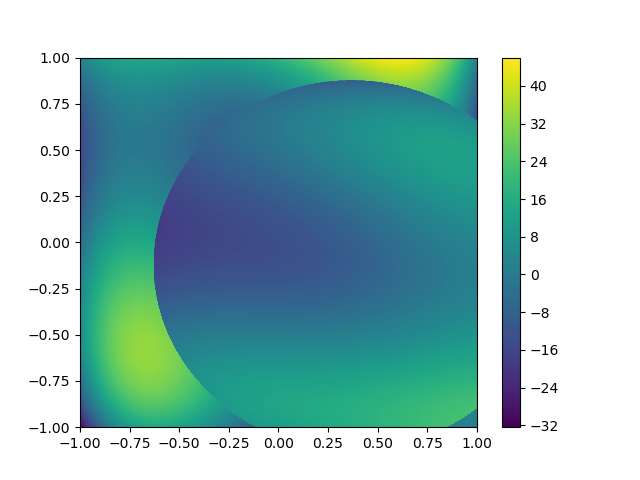}}
	\subcaptionbox{Function ($ii$)}{\includegraphics[trim=25 20 50 25,clip,width=0.24\textwidth]{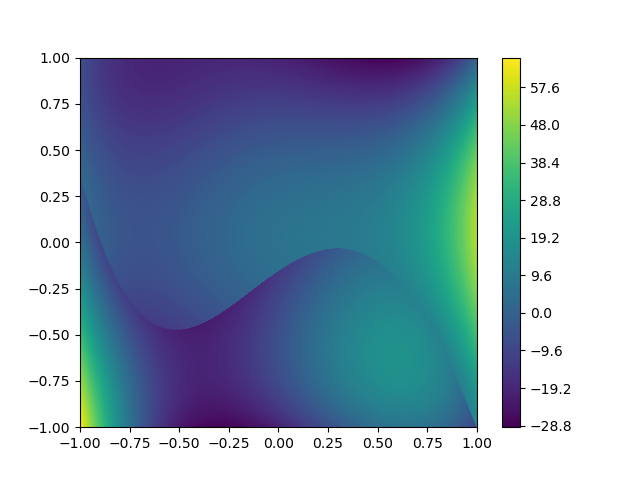}}
    \subcaptionbox{Function ($iii$)}{\includegraphics[trim=25 20 50 25,clip,width=0.24\textwidth]{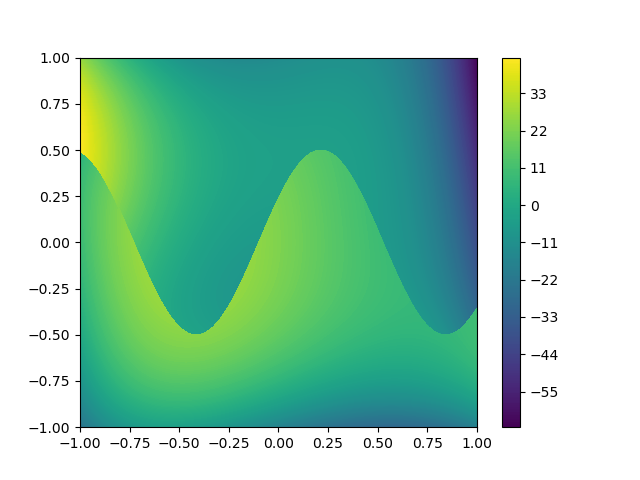}}
    \subcaptionbox{Function ($iv$)}{\includegraphics[trim=25 20 50 25,clip,width=0.24\textwidth]{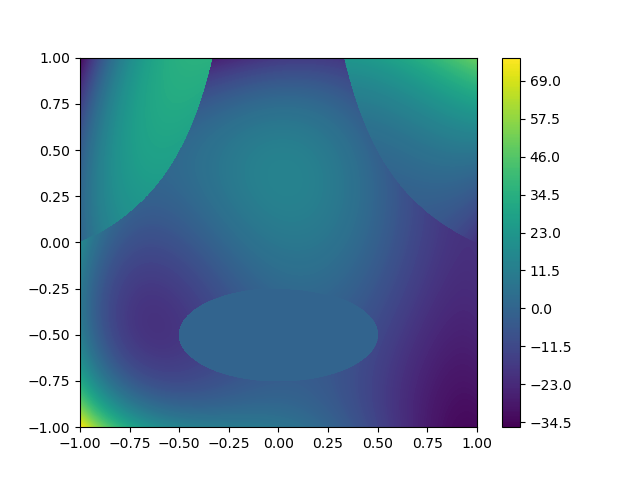}}
    \caption{Top view of test functions ($i$)-($iv$), function values given by colormaps. From left to right: circular cut, Legendre polynomial cut, sinusoidal cut, and ellipsis plus two bow-shaped cuts.}
    \label{fig:2D_testfuncs}
\end{figure}

The performances of the NN-based detectors are measured according to a True Positive Rate (TPR). Let $T$ be the set of final troubled points detected by the method and let $\Lambda_{\min}$ be the minimum edge length used as stopping criterion (see Algorithms \ref{alg:sg_alg_detection} and \ref{alg:sg_nn_alg_detection_parallel}); then, the TPR with respect to each test functions is computed {as follows}:
\begin{enumerate}
    \item let $\zeta$ be the zero-level set of the function characterizing the discontinuity interface of the test function. Let $\widetilde{\mathcal{S}}_2$ be the 2-dimensional sparse grid {represented by the blue dots in \Cref{fig:TPRandTNRprobs_a};}
    
    \item for each $\v{x}\in T$, center on $\v{x}$ a sparse grid $\widetilde{\mathcal{S}}'_2(\v{x})\simeq \widetilde{\mathcal{S}}_2$ with box of edge length $\Lambda_{\min}$. We denote by $\widetilde{G}(\v{x})$ the SGG of $\widetilde{\mathcal{S}}'_2(\v{x})$ (see \Cref{fig:TPRandTNRprobs_a});
    
    \item $\v{x}$ is considered as a true troubled point if $\zeta$ intersects at least one edge of $\widetilde{G}(\v{x})$ (see \Cref{fig:TPRandTNRprobs_b});
    
    \item compute TPR as $(\textit{\# true troubled points})/|T|$.
\end{enumerate}

\begin{remark}[Focus on TPR]\label{rem:focus_TPR}
For measuring the detection quality of our method,
we do not consider a True Negative Rate (TNR) because a definition of \emph{true non-troubled point} based on the distance from $\zeta$ (like \emph{true trouble points} in TPR) would be misleading, even in the case of exact detectors. 
In fact, \Cref{alg:sg_alg_detection} labels a point $\v{x}_i$ as \emph{non-troubled point} not because it is far from the discontinuity interface, but according to other criteria. Specifically,  $\v{x}_i$ is a \emph{non-troubled point} either because no edge with $\v{x}_i$ as vertex intersects the discontinuity interface (e.g., see the magenta triangle in \Cref{fig:TPRandTNRprobs_b}) or because there exists an edge $e_{ij}=\{\v{x}_i,\v{x}_j\}$, $\norm{\v{x}_i-\v{x}_j} < \Lambda_{\min}$, such that $e_{ij}$ intersects the discontinuity interface but $\v{x}_j$ is closer to the intersection than $\v{x}_i$ (e.g., see orange the triangle in \Cref{fig:TPRandTNRprobs_b}). In both cases, $\v{x}_i$ is marked as \emph{non-troubled point} not due to its distance from the discontinuity interface, but because there were better alternatives to it. To better understand these observations, we report in \Cref{fig:TPRandTNRprobs_b} an example in which two true troubled points are detected; in the figure we highlight two examples of non-troubled points inside the area delimited by the grid $\widetilde{\mathcal{S}}_2$ in the middle of the picture: the orange triangle marks a non-troubled point that has an edge intersecting $\zeta$, and the magenta triangle denotes a non-troubled point without edges that intersect $\zeta$. If we would use these points as centers for the grid $\widetilde{\mathcal{S}}_2$ adopted to check the $\zeta$-intersection of the detected troubled points, the grid would intersect $\zeta$. For this reason a definition of \emph{true non-troubled point} based on the distance from $\zeta$ like the \emph{true trouble points} would be misleading.
\end{remark}

\begin{figure}[htb]
    \centering
    \subcaptionbox{SGG of the sparse grid $\widetilde{\mathcal{S}}_2$\label{fig:TPRandTNRprobs_a}}{
    \includegraphics[width=0.2\textwidth]{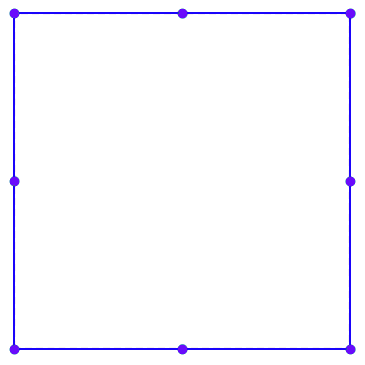}
    \vspace{1.1cm}
    }
    \qquad
    \subcaptionbox{TPR check example\label{fig:TPRandTNRprobs_b}}{\includegraphics[width=0.35\textwidth]{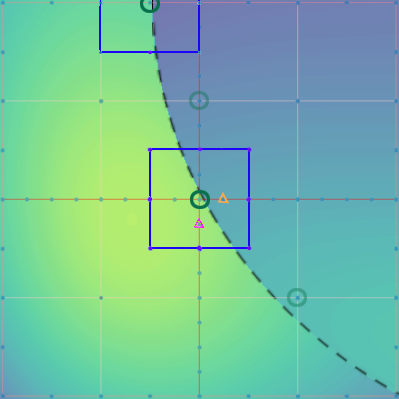}}
    \caption{Left: SGG based on the sparse grid $\widetilde{\mathcal{S}}_2$. Right: use of the SGG for computing the TPR. Red segments are edges of the detector's SGG; green circles are troubled points detected at the current step (dark green for final troubled points and light green for points used as grid centers in the next step); in the $\widetilde{\mathcal{S}}_2$ grid in the middle, orange and magenta triangles highlight two examples of non-troubled points sufficiently near to $\zeta$ (orange triangle if an edge intersects $\zeta$, magenta triangle if no edge intersects $\zeta$, see \Cref{rem:focus_TPR}).}
    \label{fig:TPRandTNRprobs}
\end{figure}

We run \Cref{alg:sg_nn_alg_detection_parallel} on the four test functions using the NN-based detectors and using the same input parameters for all the cases; i.e., the same starting set $L$ of center and edge length pairs and the same $\Lambda_{\min} = 2/2^{(h_{\max}+1)} = 2^{-5}$ for stopping criterion ($h_{\max}=5$, maximum refinement level of the sparse grid). Moreover, since during the procedure some sparse grids may have points outside the domain $\Omega_2$, we modify \Cref{alg:sg_nn_alg_detection_parallel} in order to stop the search if a troubled point is detected outside the domain (i.e., it stops even if the stopping criterion with respect to $\Lambda_{\min}$ is not satisfied).

In \Cref{tab:MLPGINN_2D_results}, we report the TPRs, the number of troubled points detected, and the total number of points visited by the two NN-based discontinuity detectors, with respect to the test functions. Looking at the results reported in this table, we observe that both the NN-based detectors have extremely good TPRs (at least $98\%$). According to the test set performances (see \Cref{tab:MLPGINN_2D_results}), the GINN-based detector is more precise than the MLP-based one (higher TPRs and TPR equal to $100\%$ in three cases over four); moreover, more troubled points are detected. In particular, looking at the positioning of the detected troubled points (Figures \ref{fig:MLPGINN_2D_results_01} and \ref{fig:MLPGINN_2D_results_02}), we can deduce that the higher sensitiveness of the GINN-based detector depends on its ability to detect troubled points even if the discontinuity jumps are very small.

In general, we can claim that the given GINN-based detector is a very good 2-dimensional detector for finding discontinuity interfaces via \Cref{alg:sg_nn_alg_detection_parallel}; indeed, it is able to detect troubled points even in presence of very small discontinuity jumps and its TPRs on the test functions are always approximately $100\%$. On the other hand, the MLP-based detector proves to be a good detector, but it can miss some parts of the discontinuity interfaces due to its lower sensitiveness.

\begin{table}[htb!]
    \centering
    \begin{tabular}{|c|c||c|c|c|}
         \hline
         NN model & Test function & TPR & Num. of troubled points & Num. of visited points \\
         \hline
         \hline
         MLP & ($i$) & $100\%$ & 557 & 11750 
         \\
         GINN & ($i$) & $100\%$ & 829 & 13244 
         \\
         \hline
         MLP & ($ii$) & $98.29\%$ & 410 & 10864 
         \\
         GINN & ($ii$) & $99.84\%$ & 642 & 11745 
         \\
         \hline
         MLP & ($iii$) & $99.22\%$ & 639 & 12281 
         \\
         GINN & ($iii$) & $100\%$ & 740 & 12046 
         \\
         \hline
         MLP & ($iv$) & $99.58\%$ & 717 & 13403 
         \\
         GINN & ($iv$) & $100\%$ & 1028 & 15397 
         \\
         \hline
    \end{tabular}
    \caption{Results of \Cref{alg:sg_nn_alg_detection_parallel} with respect to test functions ($i$)-($iv$), using NN-based detectors.}
    \label{tab:MLPGINN_2D_results}
\end{table}

\begin{figure}[htb!]
    \centering
    \subcaptionbox{MLP, function ($i$)}{
    \includegraphics[width=0.475\textwidth]{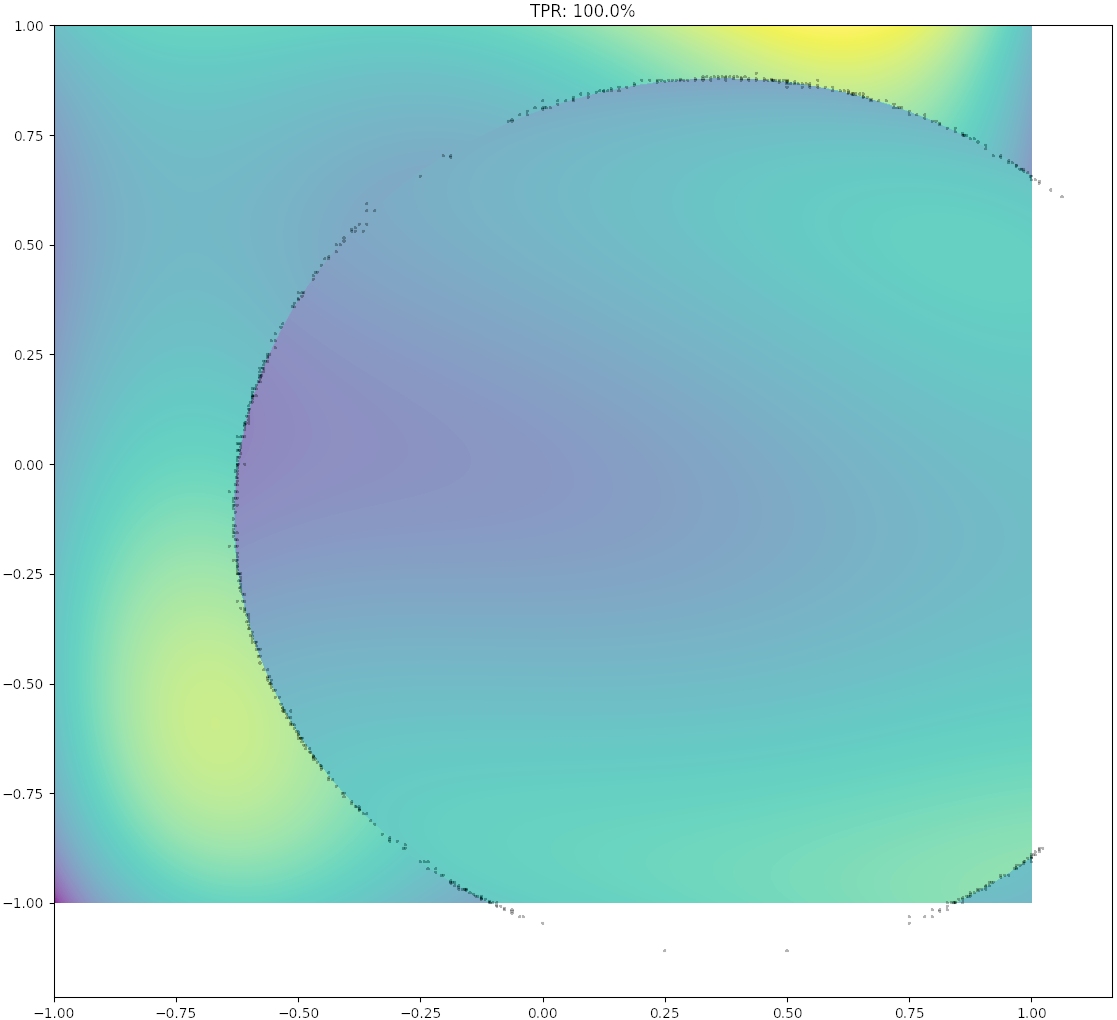}
    }
    \subcaptionbox{GINN, function ($i$)}{
    \includegraphics[width=0.475\textwidth]{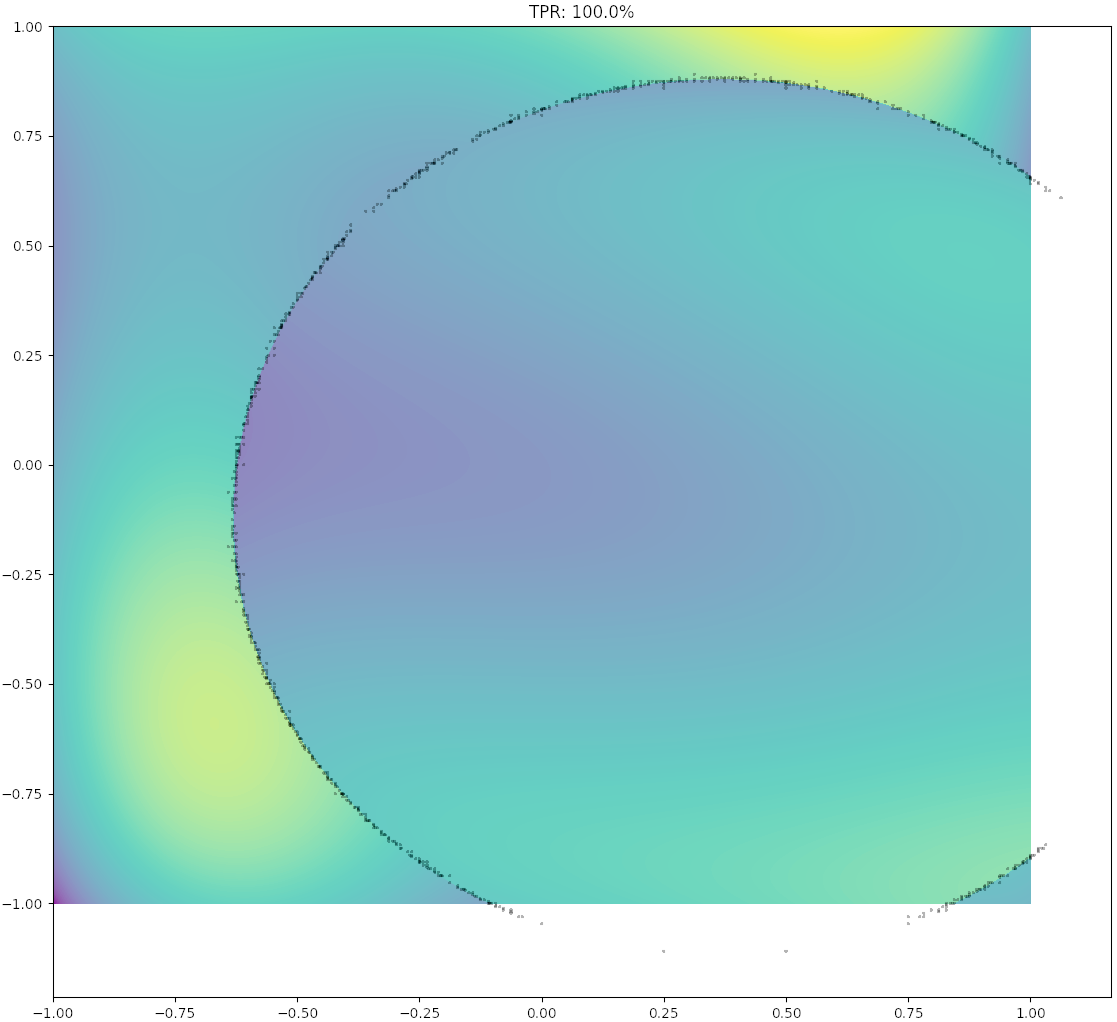}
    }
    \\
    \subcaptionbox{MLP, function ($ii$)}{
    \includegraphics[width=0.475\textwidth]{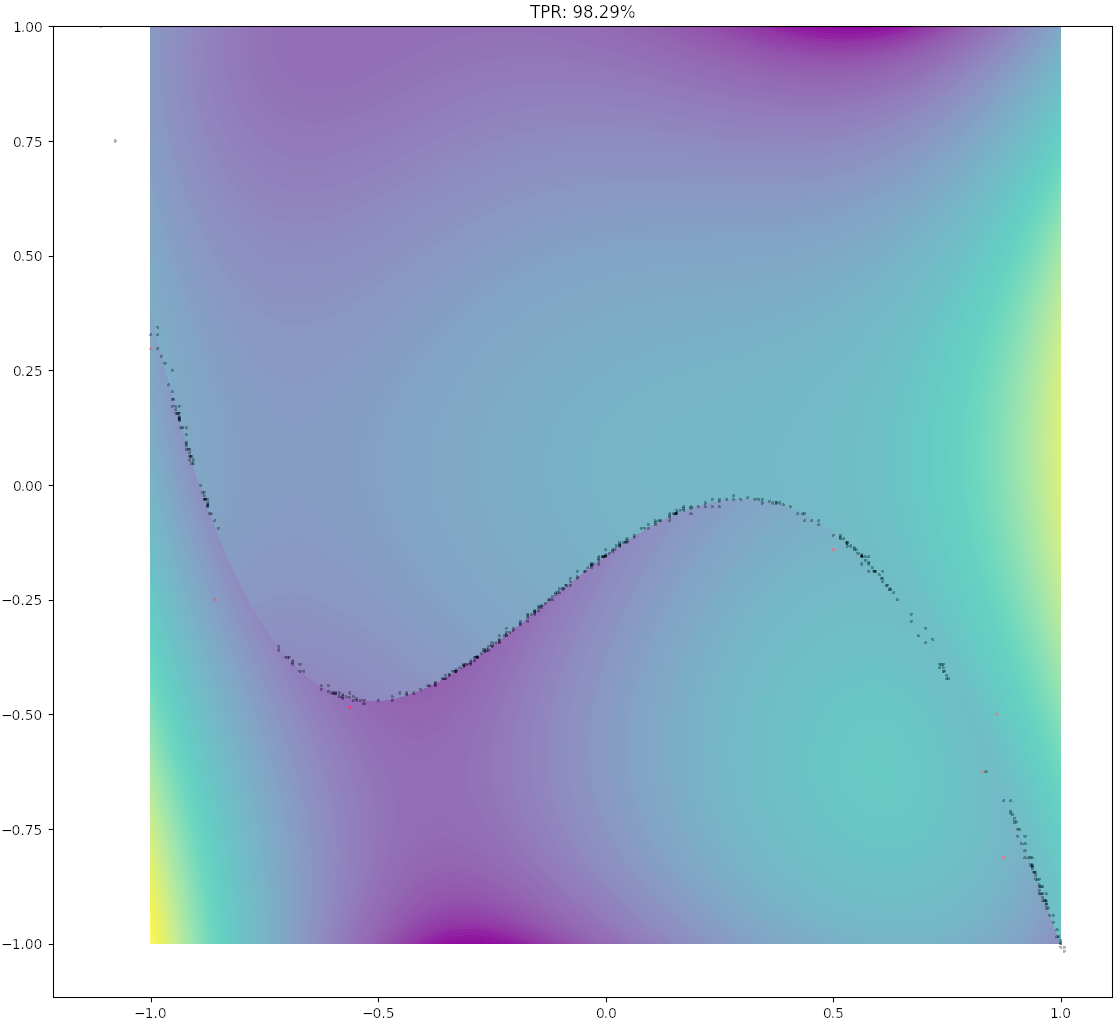}
    }
    \subcaptionbox{GINN, function ($ii$)}{
    \includegraphics[width=0.475\textwidth]{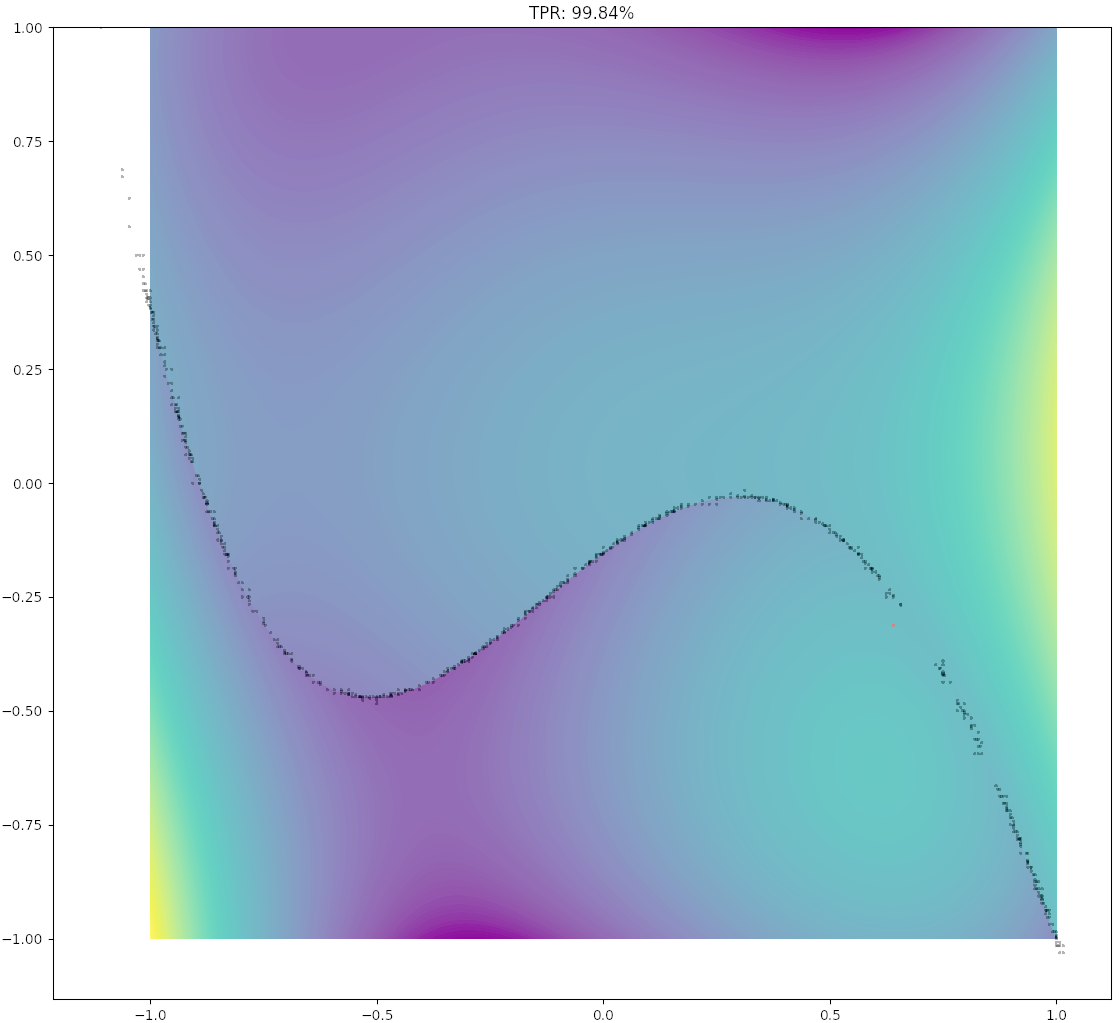}
    }
    \caption{Troubled points detected for the test functions ($i$) and ($ii$) with \Cref{alg:sg_nn_alg_detection_parallel}, using the NN-based detectors. Black dots are true troubled points, red dots (almost invisible for their small number) are false troubled points.}
    \label{fig:MLPGINN_2D_results_01}
\end{figure}
\begin{figure}[htb!]
    \subcaptionbox{MLP, function ($iii$)}{
    \includegraphics[width=0.475\textwidth]{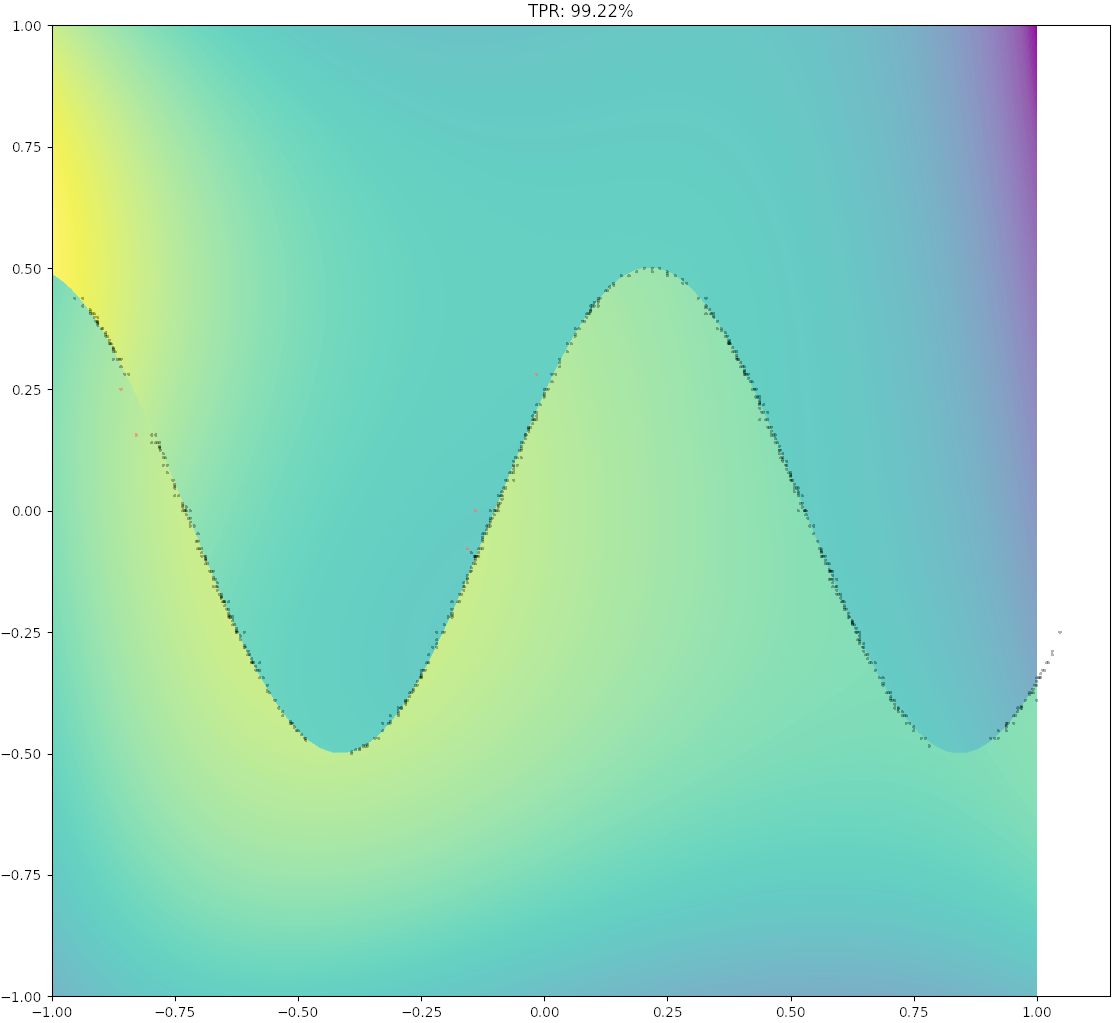}
    }
    \subcaptionbox{GINN, function ($iii$)}{
    \includegraphics[width=0.475\textwidth]{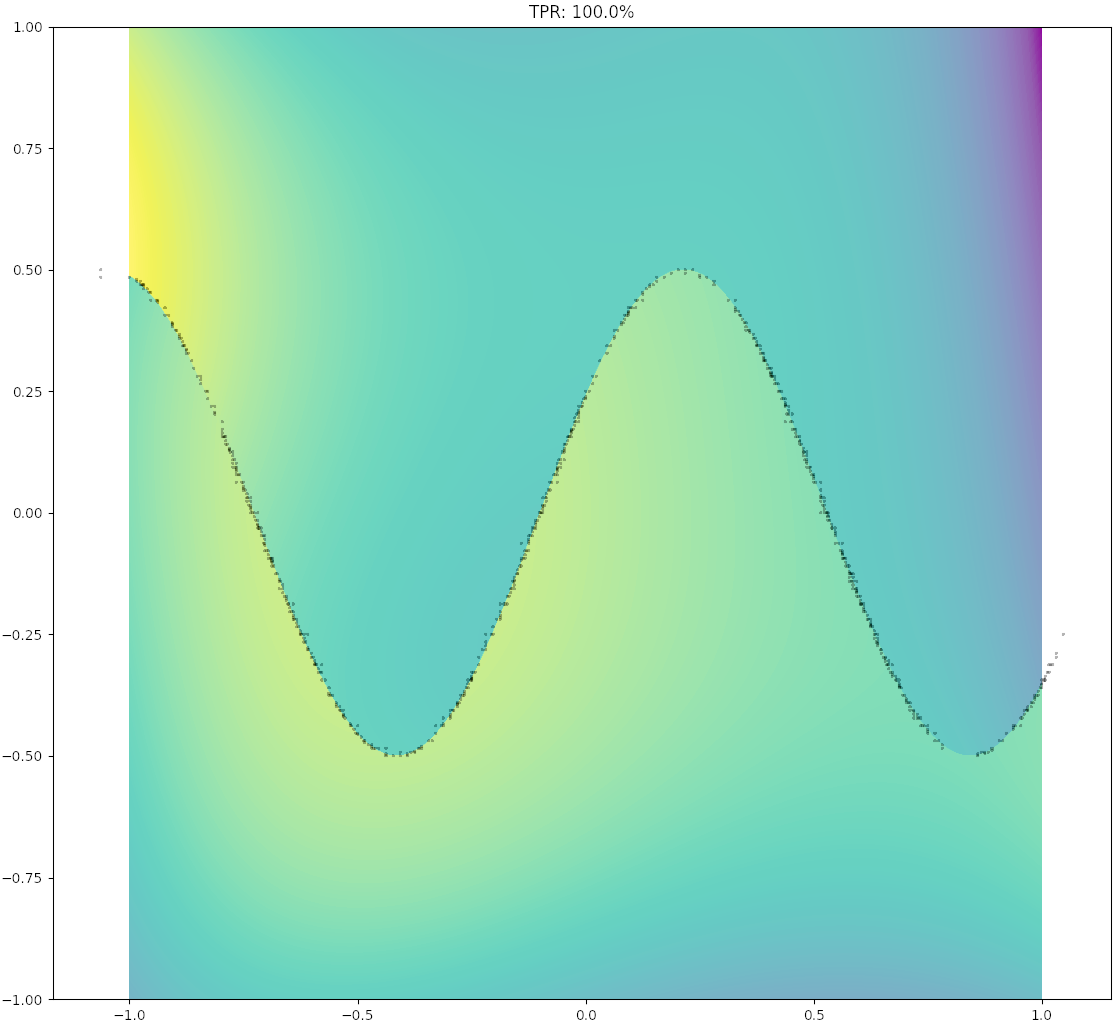}
    }
    \\
    \subcaptionbox{MLP, function ($iv$)}{
    \includegraphics[width=0.475\textwidth]{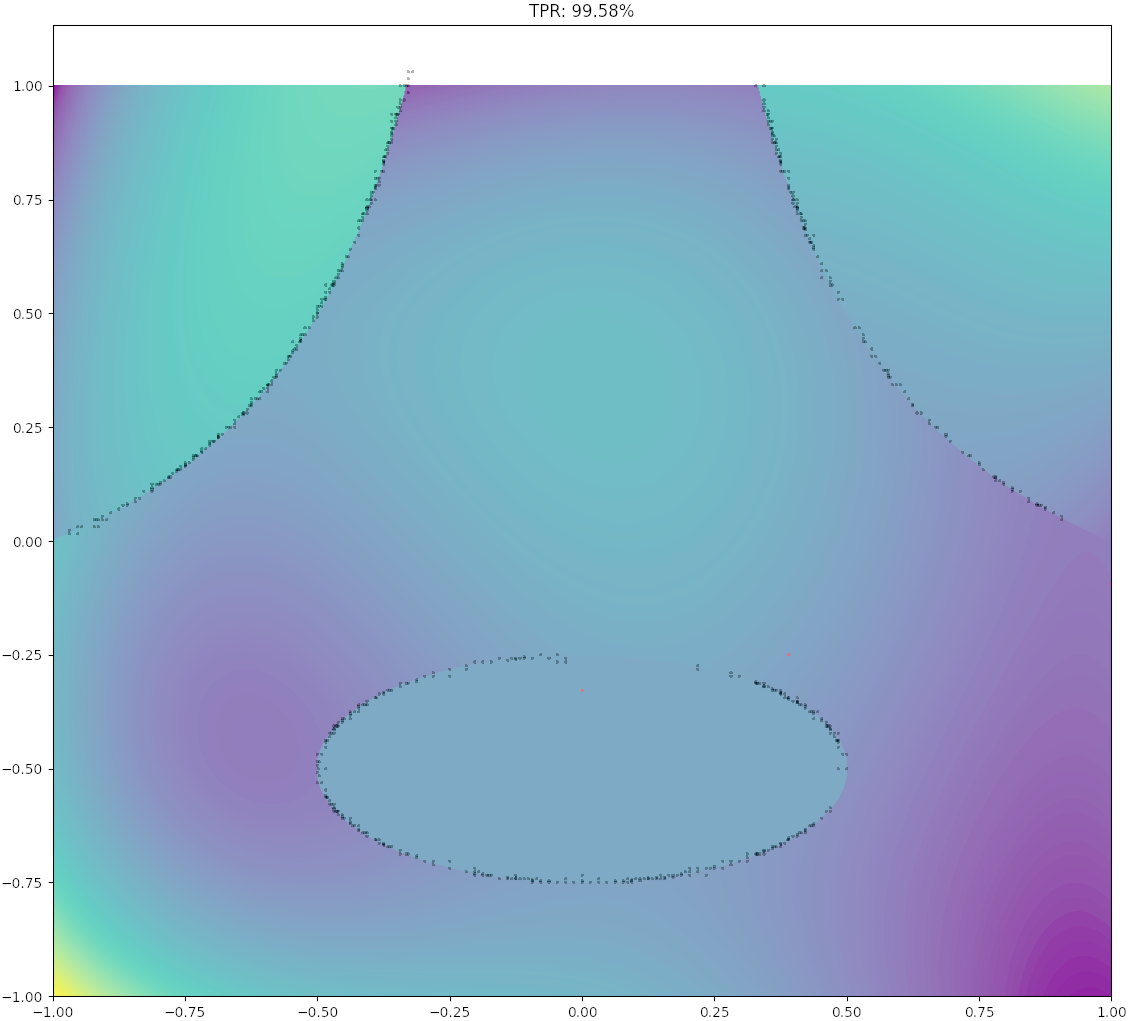}
    }
    \subcaptionbox{GINN, function ($iv$)}{
    \includegraphics[width=0.475\textwidth]{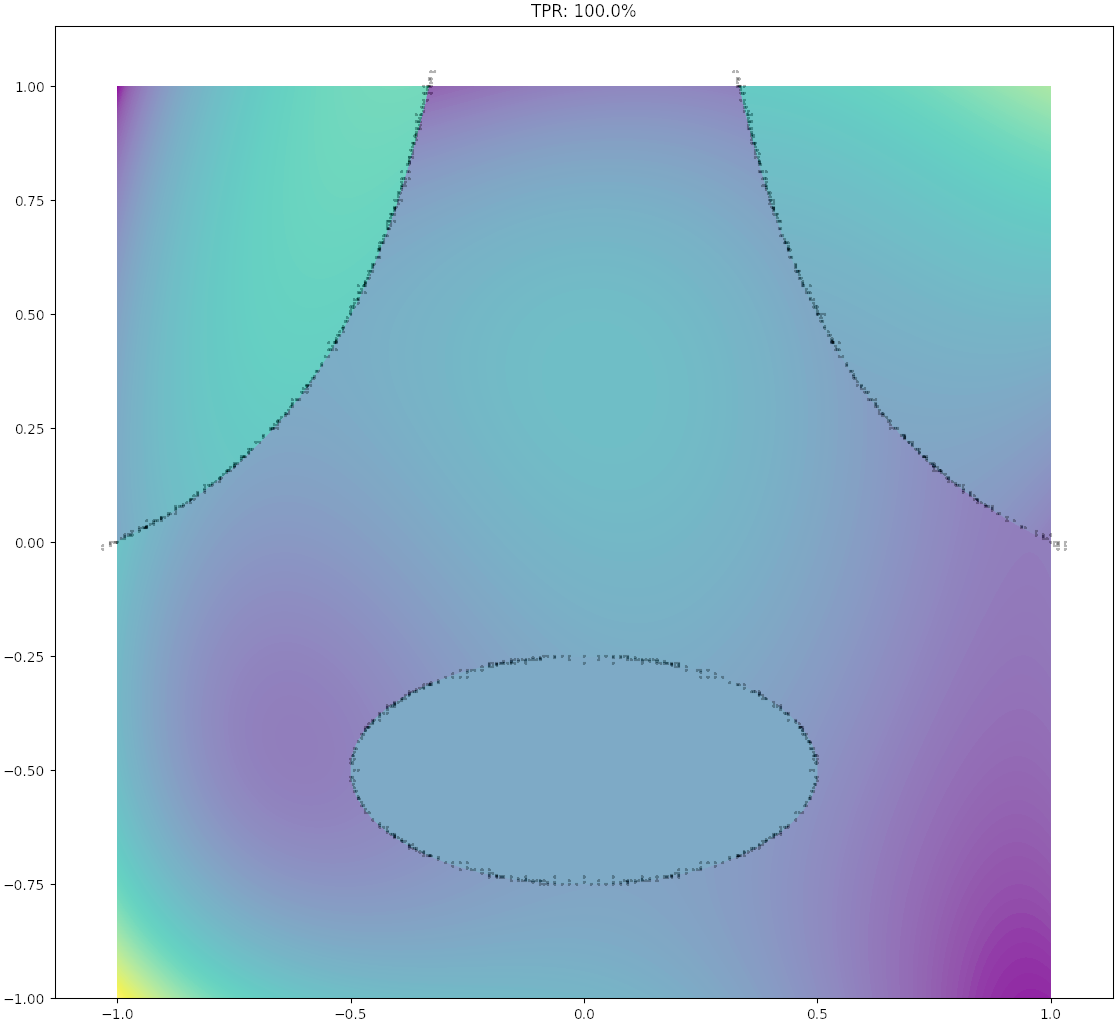}
    }
    \caption{Troubled points detected for the test functions ($iii$) and ($iv$) with \Cref{alg:sg_nn_alg_detection_parallel}, using the NN-based detectors. Black dots are true troubled points, red dots (almost invisible for their small number) are false troubled points.}
    \label{fig:MLPGINN_2D_results_02}
\end{figure}

\subsubsection{Edge Detection Problem in Images}\label{sec:edge_detection}

Black and white images can be modeled as functions $g:\Omega\subset\R^2\rightarrow[0,1]$, where $g(\v{x})$ is the grayscale level of the pixel, for each pixel $\v{x}\in\Omega$. Then, two-dimensional detectors can be used for detecting edges in pictures, since edges are discontinuity interfaces of $g$. Analogously to \cite{Wang2022}, we test our NN-based two-dimensional detectors on the edge detection problem of the Shepp-Logan phantom \cite{SheppLoganPhantom_1974}, varying the image resolution. In particular, let $M=(M_{ij})\in [0,1]^{r\times r}$ be the matrix representing the Shepp-Logan phantom image with resolution $r\times r$ pixels; then, we apply \Cref{alg:sg_nn_alg_detection_parallel} modeling the image as a function $g_r:\R^2\rightarrow [0,1]$ such that:
\begin{equation*}\label{eq:SLphantom_R}
    g_r(\v{x}) = 
    \begin{cases}
    M_{[x_1]+1,[x_2]+1}\,,\quad & \text{if }0\leq [x_1],[x_2]\leq r-1\\
    0\,,\quad & \text{otherwise}
    \end{cases}
    \,,
\end{equation*}
where $[x]$ is the rounding to the nearest integer of $x$, for each $x\in\R$.

In \Cref{fig:SLphantom}, we report the results obtained on $g_r$ using \Cref{alg:sg_nn_alg_detection_parallel} and the two NN-based discontinuity detectors, for each $r=
512,1024$. Looking at these figures, we observe that the edge detection performances of the NN-based method increase with the resolution, similarly to the CNN detectors of \cite{Wang2022}; however, we see that the GINN-based detector shows very good performances even in the resolution $512\times 512$
while the MLP-based detector obtains acceptable results (despite some imperfections) only for resolution $1024\times 1024$ (see \Cref{fig:SLphantom}). Therefore, we have further experimental evidence of the better detection abilities of the GINN-based detector with respect to the MLP-based one.

A very interesting characteristic of our method applied to the edge detection problem can be observed in the rightmost column of \Cref{tab:SLphantom}, which reports the number of troubled points detected and the total number of points visited by the NN-based detectors, for each image resolution. Indeed, we observe that increments in the resolution do not imply a considerably higher number of points visited by the method. This characteristic is very important because it means that increments in the image resolution guarantee better results at almost the same cost; in particular, 
we have that the function evaluations done by the algorithm are (approximately) only $330^2$, that is much smaller than the number of pixels available ($512^2$ and $1024^2$, respectively).

\begin{figure}[htbp!]
    \centering
    \subcaptionbox{MLP, $512\times 512$}{
    \includegraphics[width=0.475\textwidth]{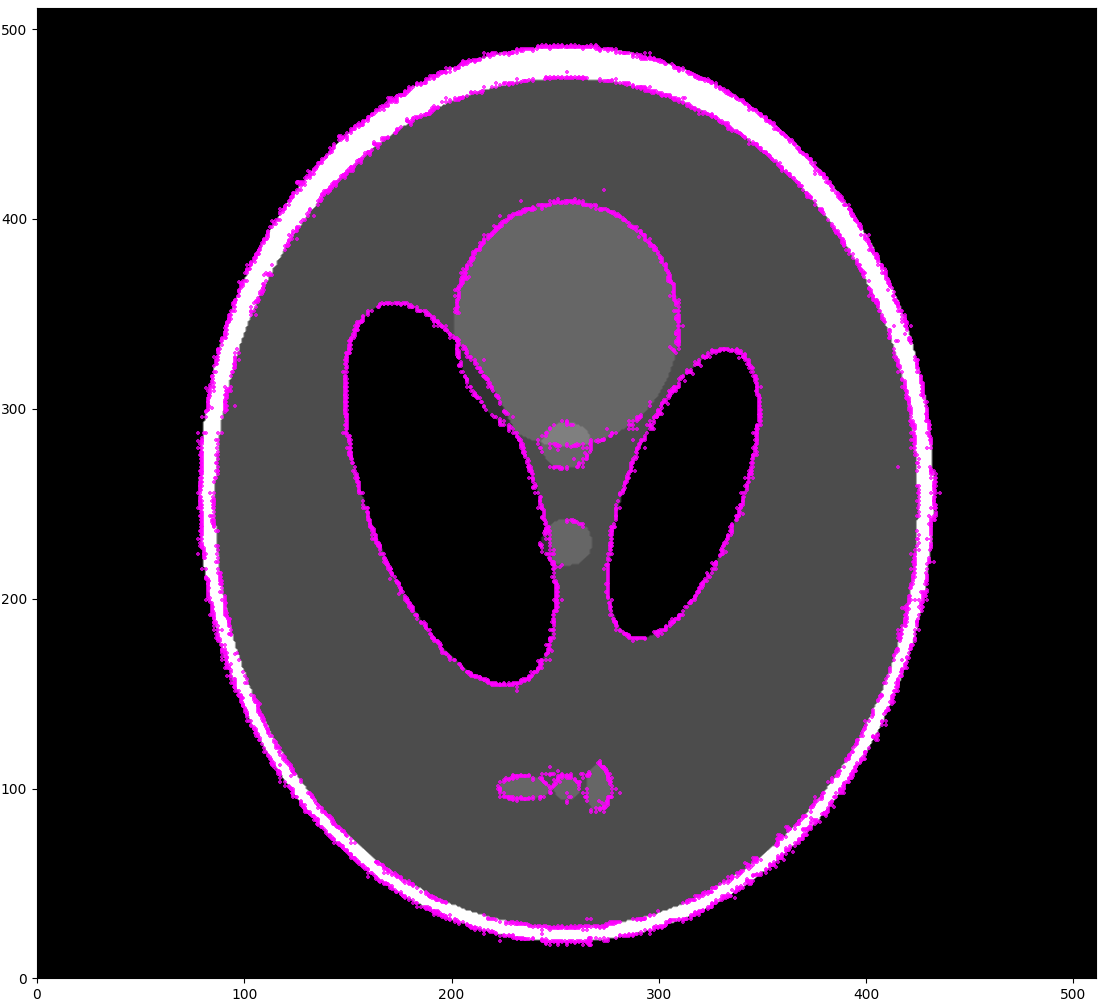}
    }
    \subcaptionbox{GINN, $512\times 512$}{
    \includegraphics[width=0.475\textwidth]{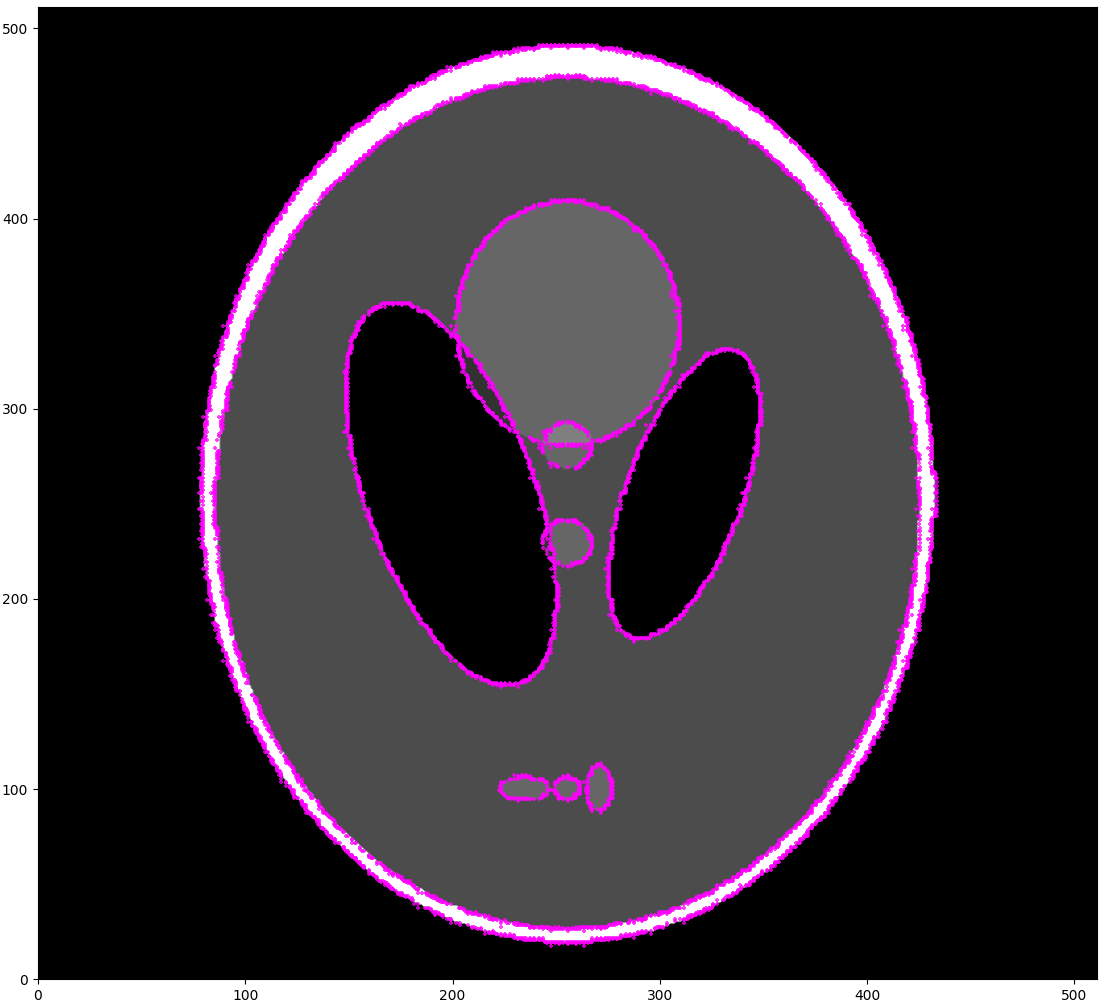}
    }
    \\
    \subcaptionbox{MLP, $1024\times 1024$}{
    \includegraphics[width=0.475\textwidth]{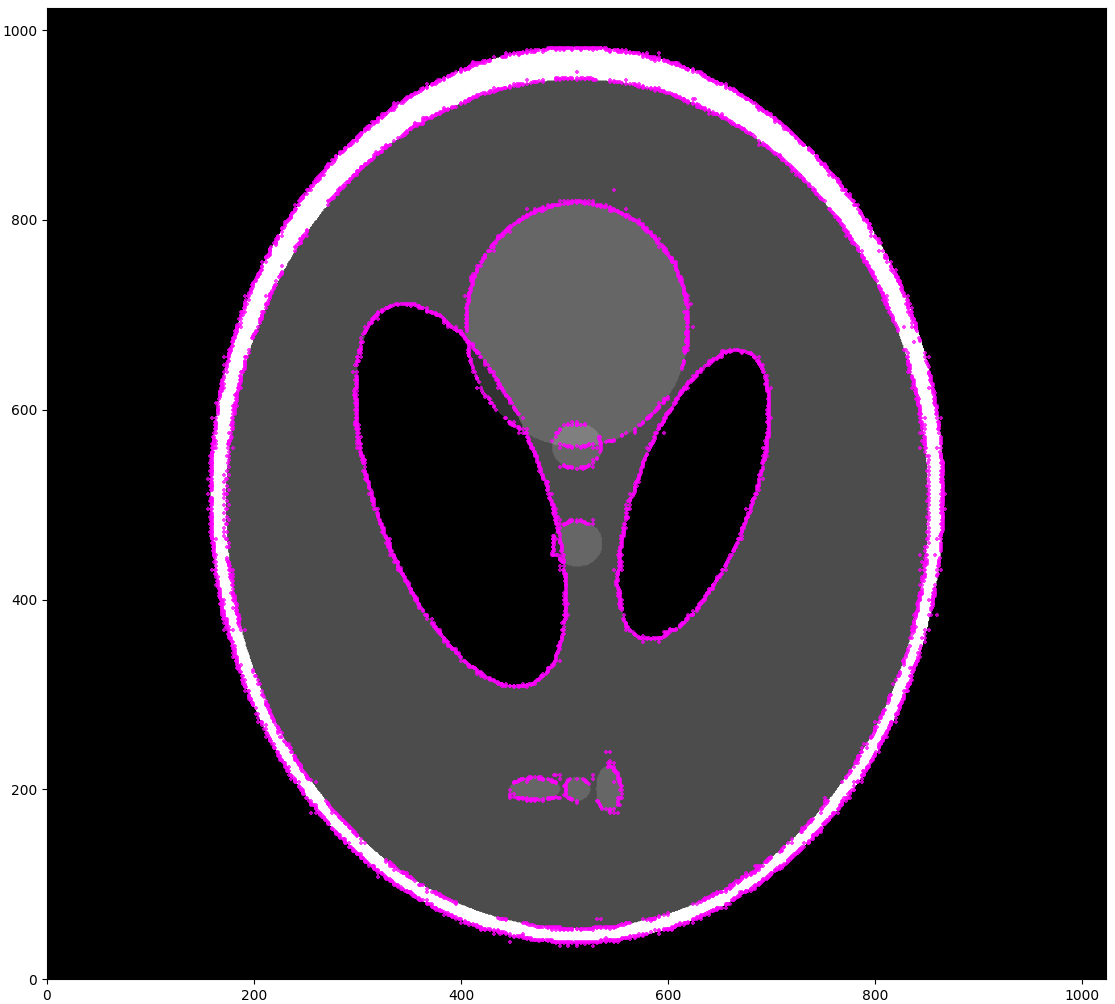}
    }
    \subcaptionbox{GINN, $1024\times 1024$}{
    \includegraphics[width=0.475\textwidth]{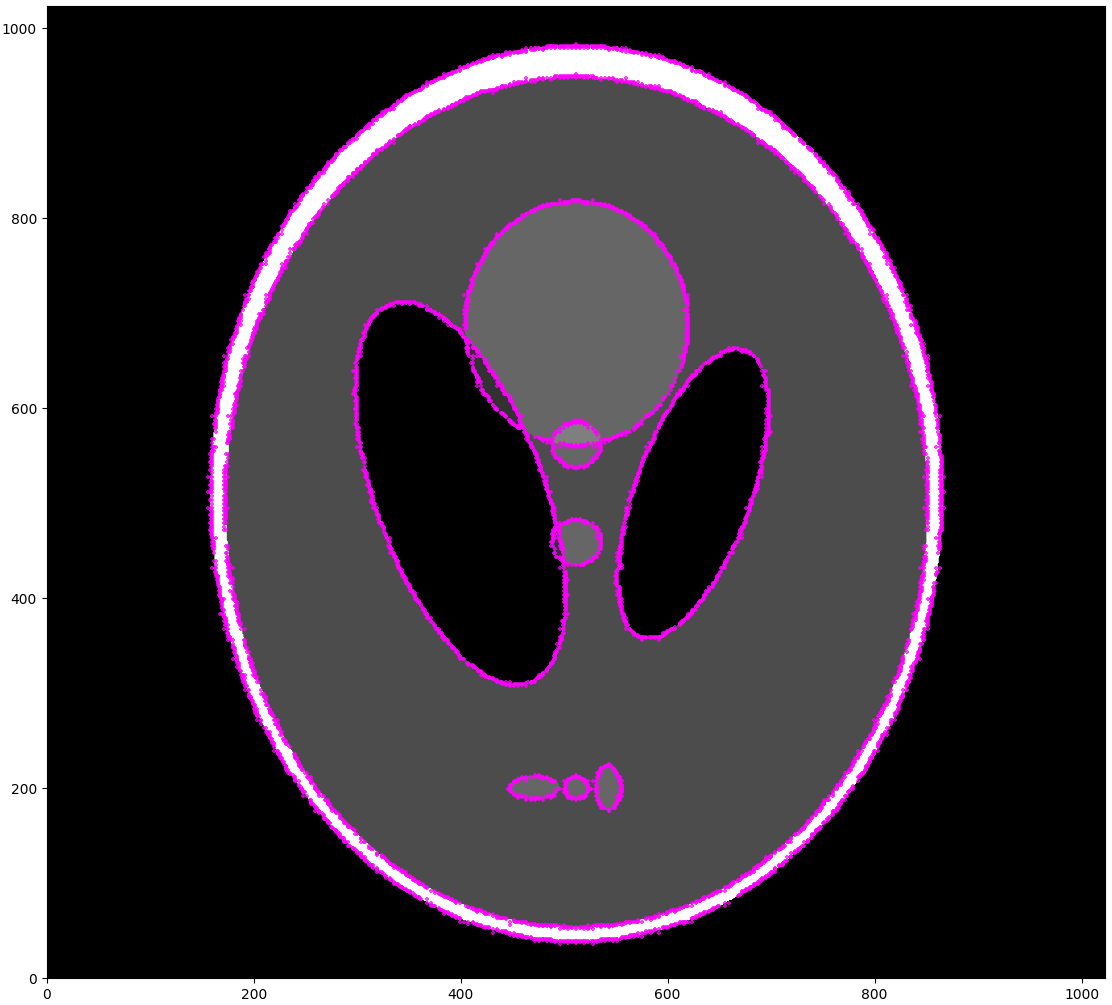}
    }
    \caption{Troubled points detected for the Shepp-Logan phantom (at different resolutions) with \Cref{alg:sg_nn_alg_detection_parallel}, using the NN-based detectors. Magenta dots are the troubled points.}
    \label{fig:SLphantom}
\end{figure}

\begin{table}[htb!]
    \centering
    \begin{tabular}{|c|c||c|c|}
         \hline
         NN model & Image Resolution & Num. of troubled points & Num. of visited points \\
         \hline
         \hline
         MLP & $512\times 512$ & 5585 & 108425 ($\sim 329^2$)
         \\
         GINN & $512\times 512$ & 6657 & 109903 ($\sim 332^2$)
         \\
         \hline
         MLP & $1024\times 1024$ & 5723 & 108282 ($\sim 329^2$)
         \\
         GINN & $1024\times 1024$ & 6795 & 110058 ($\sim 332^2$)
         \\
         \hline
    \end{tabular}
    \caption{Results of \Cref{alg:sg_nn_alg_detection_parallel} with respect to the Shepp-Logan phantom, at different resolutions, using NN-based detectors.}
    \label{tab:SLphantom}
\end{table}

\subsection{Four-dimensional Discontinuity Detection}\label{sec:4dim_detection}

Following the considerations made before, in this subsection we focus on a GINN-based detector, as in general GINNs perform better than MLPs. The sparse grid we consider is $\mathcal{S}_4:=\mathcal{I}_{\rm sum}(8)$ in $\R^4$ ($N=401$ points). Then, we consider a randomly generated set $\mathcal{G}_4$ of 750 functions, again with linear, spherical, and polynomial cut (250 each). We use a larger set of functions (with respect to \Cref{sec:2dim_detection}) because we use a less precise approximation of the exact detector for the creation of the synthetic dataset ($Z^{(3)}=\left . Z^{(t+1)}\right|_{t=2}$, see \ref{sec:zerolevdet} for details); indeed, we prefer a less precise detector to make the dataset creation process less expensive, even if at the cost of a less detailed dataset.

Given the sparse grid and $\mathcal{G}_4$, we proceed analogously to \Cref{sec:2dim_detection}: we build a GINN-based detector with respect to the SGG of $\mathcal{S}_4$, we generate a synthetic dataset $\mathcal{D}_4$, and we train the GINN-based detector. The performances of this detector on the test set $\mathcal{P}_4$ are reported in \Cref{tab:testset_perf_4dims}.

\begin{table}[htb!]
    \centering
    \begin{tabular}{|c|c||c|c|}
\hline
        NN model & \# trainable param.s & Loss function value & MAE \\
        \hline
        \hline
        GINN & $1\,263\,540$ & 37.5907 & 0.0585 \\
        \hline
    \end{tabular}
    \caption{Four-dimensional case. Performances on $\mathcal{P}_4$ of the GINN-based detector.}
    \label{tab:testset_perf_4dims}
\end{table}

Given the trained GINN-based four-dimensional detector, we test its abilities running \Cref{alg:sg_nn_alg_detection_parallel} ($\Lambda_{\min} = 2/2^{(h_{\max})} = 2^{-4}$) on a piece-wise continuous four-dimensional test function. Specifically, we use a test function $g_\vartheta:\Omega_4\subseteq\R^4\rightarrow\R$ with domain $\Omega_4=[-1,1]^4$, continuous pieces defined by four-dimensional Legendre polynomials (see \eqref{eq:n_dim_legendre}, \ref{sec:synth_data_creation}), and as discontinuity interface a torus with size depending on $x_4$; i.e., the discontinuity interface of $g_\vartheta$ is characterized by the zero-level set of the function
\begin{equation}\label{eq:enlarging_torus}
    \vartheta(\v{x})= \left(|x_4| - \sqrt{x_1^2 + x_2^2}\right)^2 + x_3^2 - (|x_4|/4)^2\,.
\end{equation}

The results of \Cref{alg:sg_nn_alg_detection_parallel} for this four-dimensional case are very promising. In \Cref{tab:GINN_4D_results}, we report the TPR, the number of troubled points detected, and the total number of points visited by the GINN-based detector; then, looking at this table, we see a TPR of $93.52\%$ for more than half a million of troubled points. Moreover, the total number of points evaluated during the algorithm execution is approximately equal to the number of points of a coarse regular grid of $60$ knots per dimension but the point distribution is more dense near to the discontinuity interface $\vartheta(\v{x})=0$, instead of being equally distributed in $\Omega_4$.

In \Cref{fig:GINN_4D_results}, we report some examples of the troubled points identified by the GINN-based detector for the function $g_\vartheta$. In particular, for fixed values of $x_4$, we show cases where the detector identifies the discontinuity interface almost perfectly with (almost) no false troubled points (Figures \ref{fig:GINN_4D_results_a} and \ref{fig:GINN_4D_results_b}), with few true troubled points (\Cref{fig:GINN_4D_results_c}), and with many false troubled points (\Cref{fig:GINN_4D_results_d}).

\begin{table}[htb!]
    \centering
    \begin{tabular}{|c||c|c|c|}
    \hline
         NN model & TPR & Num. of troubled points & Num. of visited points \\
         \hline
         \hline
         GINN & $93.52\%$ & 606220 & 13139968 ($\sim 60^4$)
         \\
         \hline
    \end{tabular}
    \caption{Results of \Cref{alg:sg_nn_alg_detection_parallel} with respect to test function $g_\vartheta$, using the four-dimensional GINN-based detector.}
    \label{tab:GINN_4D_results}
\end{table}

\begin{figure}[htb!]
    \centering
    \subcaptionbox{$x_4 = -1$\label{fig:GINN_4D_results_a}}{
    \includegraphics[width=0.45\textwidth]{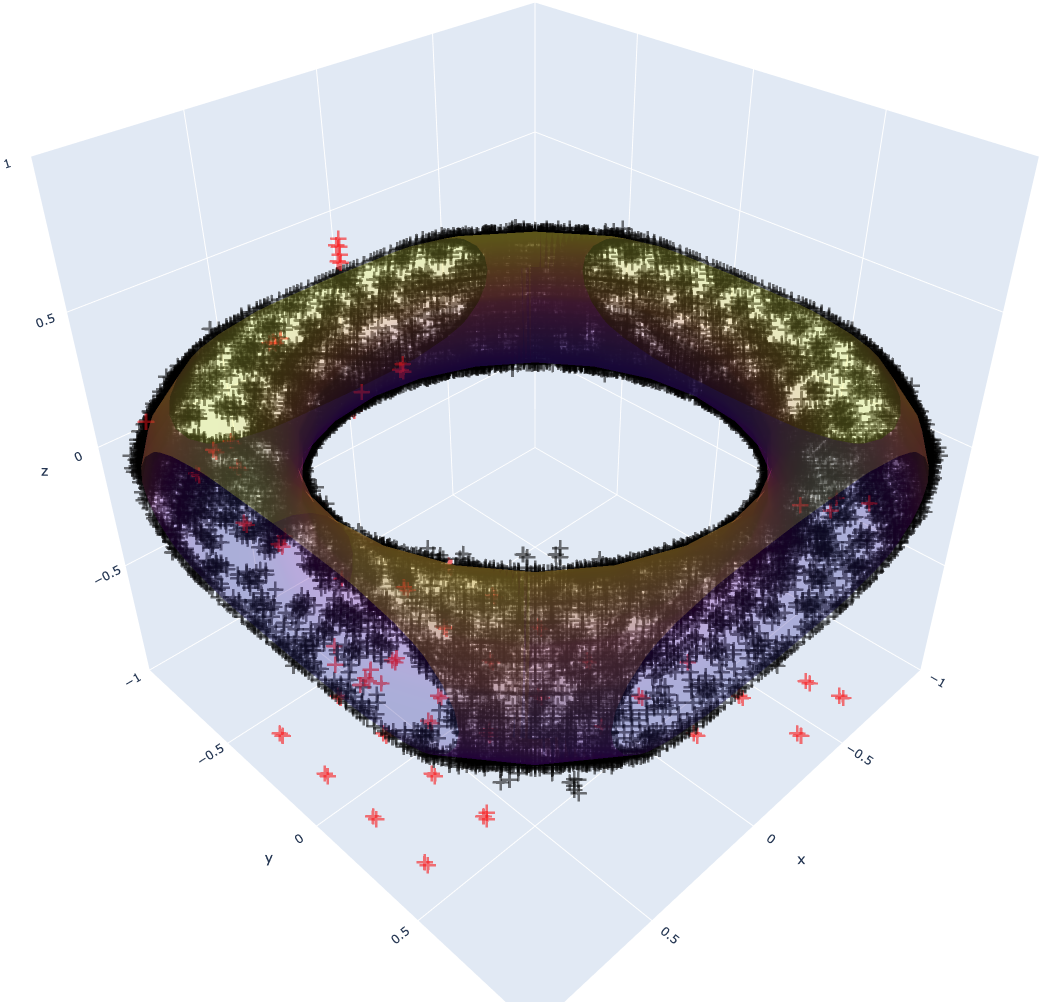}
    }
    \subcaptionbox{$x_4\simeq -0.8$\label{fig:GINN_4D_results_b}}{
    \includegraphics[width=0.45\textwidth]{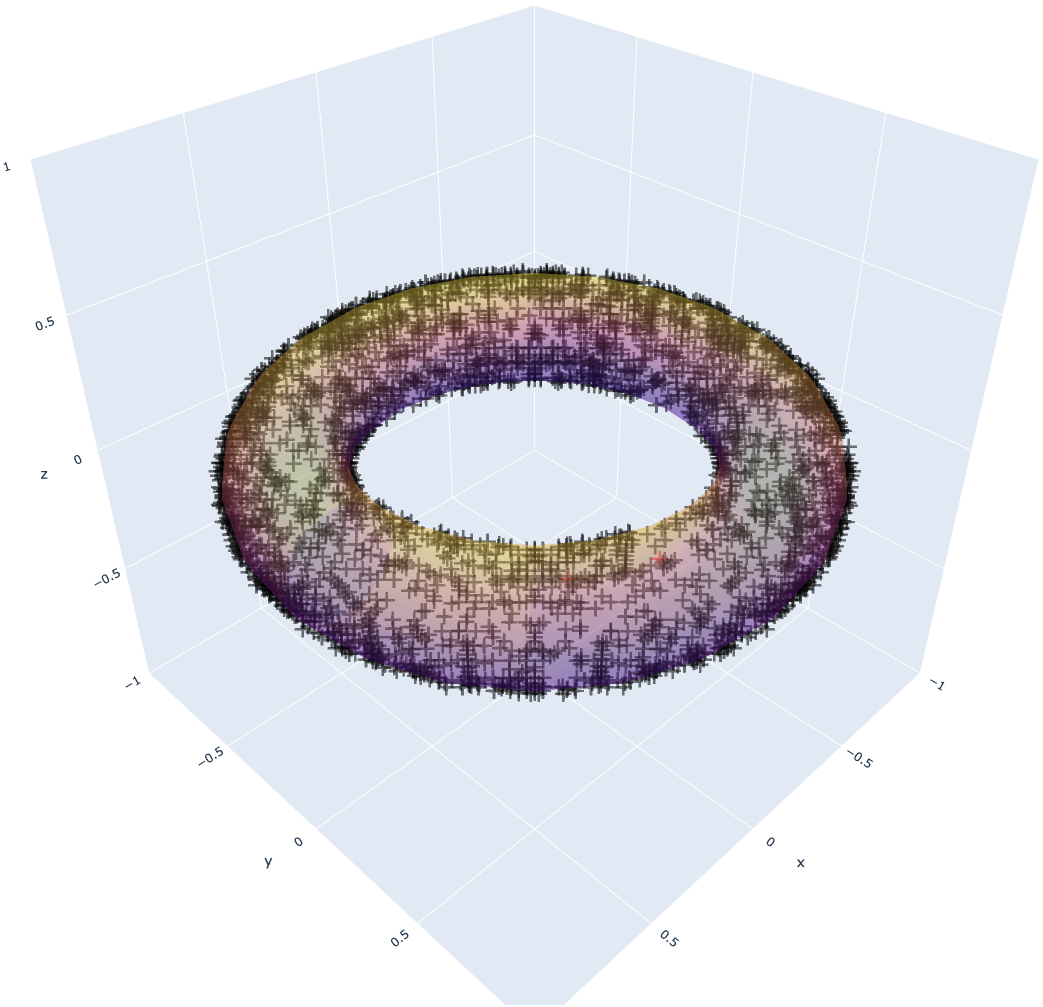}
    }
    \\
    \subcaptionbox{$x_4\simeq -0.6$\label{fig:GINN_4D_results_c}}{
    \includegraphics[width=0.45\textwidth]{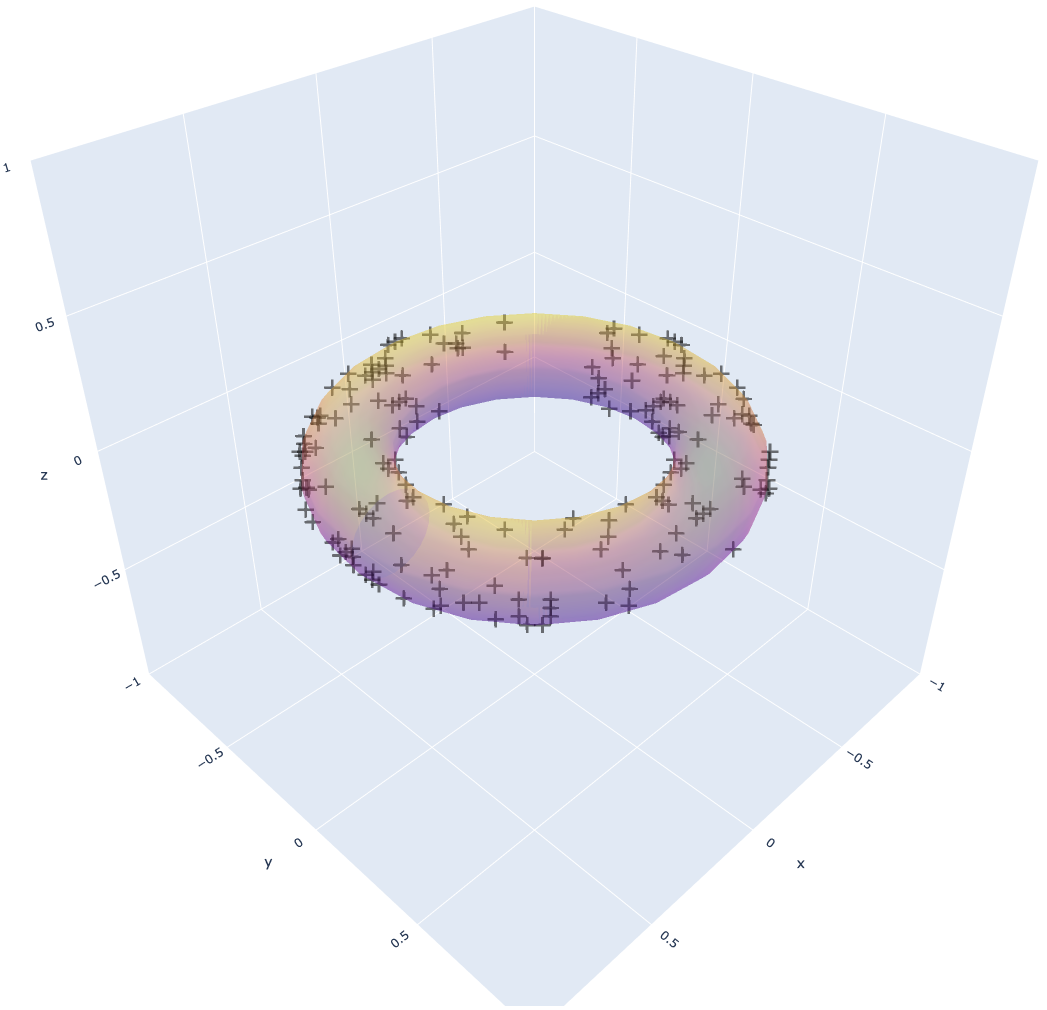}
    }
    \subcaptionbox{$x_4 = -0.5$\label{fig:GINN_4D_results_d}}{
    \includegraphics[width=0.45\textwidth]{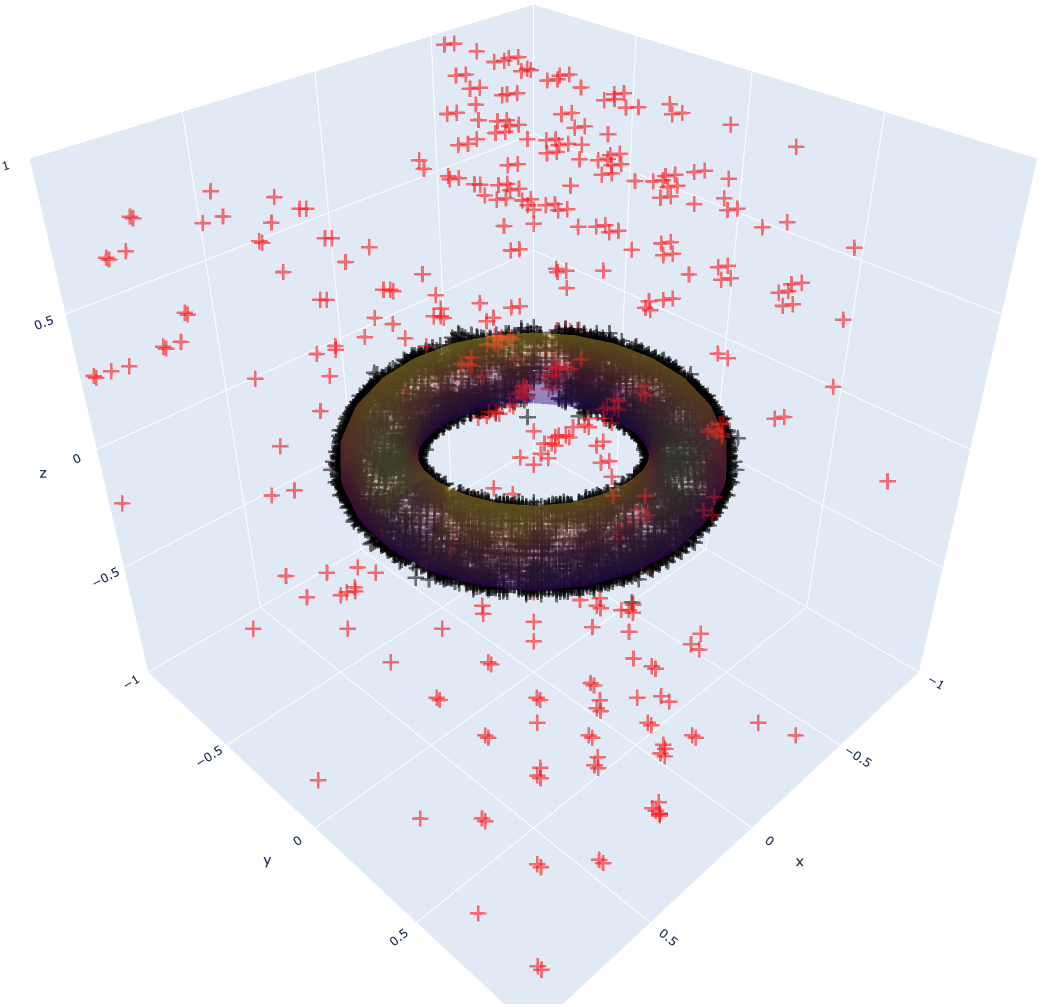}
    }
    \caption{Visual examples of the troubled points detected by the GINN-based detector for $g_\vartheta$. Each picture is a visualization in $\R^3$ of $g_\vartheta$ for a fixed value of $x_4$. The colored surface is the size-changing torus $\vartheta(\v{x})=0$ (see \eqref{eq:enlarging_torus}). Black dots are true troubled points, red dots are false troubled points.}
    \label{fig:GINN_4D_results}
\end{figure}

\subsection{Costs and Potentialities}\label{sec:proscons}

As observed in the experiments above, the proposed NN-based detectors have good potential. In particular, a well-trained NN-based $n$-dimensional detector can identify with high precision the discontinuity interfaces of generic discontinuous functions with domain in $\R^n$. We recall that the NN (better if a GINN) is trained only once on a synthetic dataset, interestingly showing very good generalization abilities. Moreover, such a kind of models have the advantage of being portable and versatile (see \Cref{rem:github_link} below); i.e., they can be shared among different users and integrated also into algorithms different from the ones proposed in this paper. Furthermore, the experiments show that these detectors and algorithms are suitable for the discontinuity detection problem in dimensions higher than 3.

Nevertheless, some drawbacks still exist. The main one is the synthetic dataset creation, based on deterministic detectors; indeed, increasing the problem's dimensionality, the cost of this operation becomes particularly expensive. One solution is to parallelize the operation; we adopted this approach for the creation of $\mathcal{D}_4$, reducing the cost from days to hours. However, we recall that the cost of the dataset creation is paid only once; hence, it is a price worth paying if it returns a good NN-based detector applicable to any discontinuity detection problem with the same domain dimension.

\begin{remark}[GINN-based detectors availability]\label{rem:github_link}
The trained NNs used in the experiments of \Cref{sec:2dim_detection} and \Cref{sec:4dim_detection}, an implementation of \Cref{alg:sg_nn_alg_detection_parallel}, and practical toy examples are available at \href{https://github.com/Fra0013To/NNbasedDiscDetectors}{https:\-//\-github.com/\-Fra0013To/\-NNbased\-DiscDetectors}. Users can modify the practical examples available in the repository to detect discontinuities in a custom target function.
\end{remark}


\section{Conclusion}\label{sec:conc}

Our work presents a promising advancement in the field of discontinuity detection, particularly in domains of dimension higher than 3. Our novel approach utilizes Graph-Instructed Neural Networks (GINNs) trained to detect troubled points within a sparse grid used for function evaluations, where the GINNs are based on the adjacency matrix of a graph constructed on the grid's structure. We have introduced a recursive algorithm for general sparse grid-based detectors, characterized by finite termination and convergence properties. Running this algorithm with a well-trained GINN as the detector results in a cost-effective and rapid discontinuity detection method suitable for functions with $n$-dimensional domains.
Our experiments, conducted on functions with dimensions $n = 2$ and $n = 4$, demonstrate the efficiency and notable generalization abilities of the GINNs in detecting discontinuity interfaces, even for functions different from those used for building the training sets. The portable and versatile nature of the trained GINNs allows them to be shared among users and integrated into new algorithms.
While our proposed NN-based detectors exhibit a good potential, some limitations persist. Creating synthetic datasets based on deterministic detectors can be costly, particularly as problem dimensionality increases. We have addressed this by parallelizing the dataset creation process.

In summary, our method offers a powerful tool for discontinuity detection, demonstrating its potential, especially for dimensions higher than 3. Further research in dataset creation efficiency, optimal NN hyper-parameters search, and algorithm improvements will permit us to develop even more efficient software for discontinuity detection techniques, expanding the applicability of this approach to a wider range of practical problems and, in particular, to even higher dimensions.

\section*{Acknowledgements}

The authors acknowledge that this study was carried out within the FAIR-Future Artificial Intelligence Research and received funding from the European Union Next-GenerationEU (PIANO NAZIONALE DI RIPRESA E RESILIENZA (PNRR)–MISSIONE 4 COMPONENTE 2, INVESTIMENTO 1.3---D.D. 1555 11/10/2022, PE00000013). This manuscript reflects only the authors’ views and opinions; neither the European Union nor the European Commission can be considered responsible for them.
The authors acknowledge support from the Italian MIUR PRIN project 20227K44ME, Full and Reduced order modeling of coupled systems: focus on non-matching methods and automatic learning (FaReX).

\subsection*{Code Availability:}
The code for loading the trained NN-based detectors illustrated in this paper, for an implementation of \Cref{alg:sg_nn_alg_detection_parallel}, and for running practical toy examples is available at \href{https://github.com/Fra0013To/NNbasedDiscDetectors}{https:\-//\-github.com/\-Fra0013To/\-NNbased\-DiscDetectors}. See also \Cref{rem:github_link}.


\appendix

\section{Practical Details for Building NN-based Detectors}\label{sec:building_details}

In this appendix, we report the details of the procedure used for building the NN-based detectors. We start illustrating how to build a deterministic approximation of an exact detector $\Delta^*$ for discontinuous functions with discontinuities contained in the zero-level set of known continuous functions (\ref{sec:zerolevdet}). Then, we introduce the random piece-wise continuous functions used for generating the synthetic data (\ref{sec:synth_data_creation}), the preprocessing function for NN inputs (\ref{sec:data_prep}), and the architecture archetype used for the NN models (\ref{sec:NNarch_archetypes}).

\subsection{Zero-level Set Detection and Approximation of Exact Detectors}\label{sec:zerolevdet}

\Cref{alg:sg_alg_detection} can be easily extended to other tasks if we use detectors focused on identifying other kind of troubled points. For example, we can consider detectors built for detecting points that are near to the zero-level set of a continuous function. In these cases, Definitions \ref{def:troubled_point}-\ref{def:exact_disc_det} can be reformulated considering a continuous function $f:\Omega\subseteq\R^n\rightarrow\R$ and its zero-level set $\zeta:=\{\v{x}\in\Omega \ | \ f(\v{x}) = 0\}$, in place of the function $g\in\mathcal{F}$ and its set of discontinuity points $\delta$, respectively. From now on, we let $Z$ denote the zero-level set detectors for such a kind of extension of \Cref{alg:sg_alg_detection}, and $Z^*$ denotes the exact ones.

Let $\mathcal{C}$ denote the set of continuous functions $f:\Omega\rightarrow\R$, for any $\Omega\subseteq\R^n$. Then, for each fixed sparse grid $\mathcal{S}$, we can define a deterministic zero-level detector $Z^{(t+1)}:\mathcal{C}\times\mathcal{S}_{/\simeq}\rightarrow\{0,1\}^N$, fixed $t\in\N$, $t\geq 2$, described by the following algorithm:
\begin{enumerate}
    \item Let $G'$ be the SGG based on $\mathcal{S}'\simeq\mathcal{S}$, $\mathbb{B}(\mathcal{S}')\subset\Omega$. Then, for each edge $\{\v{x}_i,\v{x}_j\}\in E'$, we compute the equispaced knots $\v{x}_{ij}^{(\tau)} := \v{x}_i + \frac{\tau}{t} (\v{x}_j-\v{x}_i)$, $\tau=0,\ldots ,t$;
    
    \item For each edge $\{\v{x}_i,\v{x}_j\}\in E'$, we compute $s(\v{x}_{ij}^{(\tau)}):=\mathrm{sign}(f(\v{x}_{ij}^{(\tau)}))$, $\tau=0,\ldots ,t$;
    
    \item For each edge $\{\v{x}_i,\v{x}_j\}\in E'$, $\v{x}_i$ is detected as troubled point ($p_i=1$) if $s(\v{x}_i)=0$ or exist $\tau\in\{1,\ldots ,\lceil t/2 \rceil \}$ such that $s(\v{x}_i)\neq s(\v{x}_{ij}^{(\tau)})$. Analogously, $\v{x}_j$ is detected as troubled point if $s(\v{x}_j)=0$ or exist $\tau\in\{\lfloor t/2 \rfloor,\ldots t-1\}$ such that $s(\v{x}_j)\neq s(\v{x}_{ij}^{(\tau)})$.
\end{enumerate}

\begin{proposition}[$Z^{(t+1)}$ approximates $Z^*$]\label{prop:zerolev_detect_t_limit}
The detector $Z^{(t+1)}$ can be considered an approximation of the exact detector $Z^*$, i.e.: 
\begin{equation}\label{eq:zerolev_detect_t_limit}
    \lim_{t\rightarrow +\infty}Z^{(t+1)}(\v{x})=Z^*(\v{x})\,,    
\end{equation}
for each $\v{x}\in\mathcal{S}'$ and each $\mathcal{S}'\simeq\mathcal{S}$.
\end{proposition}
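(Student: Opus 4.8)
The plan is to prove the statement componentwise. Since both $Z^{(t+1)}(\v{x})$ and $Z^*(\v{x})$ live in the discrete set $\{0,1\}^N$, the limit in \eqref{eq:zerolev_detect_t_limit} must be read as ``for every grid point the two detectors return the same label for all $t$ large enough.'' Because each detector flags $\v{x}_i$ by taking a disjunction over the finitely many edges incident to $\v{x}_i$, it suffices to fix one edge $e_{ij}=\{\v{x}_i,\v{x}_j\}$ and one endpoint, say $\v{x}_i$, and show the per-edge decision of $Z^{(t+1)}$ stabilizes to that of $Z^*$; the finitely many per-edge agreements then combine into agreement of the disjunctions. First I would parametrize the segment by $\gamma(\rho)=\v{x}_i+\rho(\v{x}_j-\v{x}_i)$, $\rho\in[0,1]$, and set $\phi:=f\circ\gamma$, a continuous function on $[0,1]$ with $\phi(0)=f(\v{x}_i)$. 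Let $\rho_*$ be the smallest zero of $\phi$ (put $\rho_*=+\infty$ if none exists). Translating the reformulated definitions: the exact detector flags $\v{x}_i$ via $e_{ij}$ exactly when $\rho_*\le 1/2$, which is the condition $\norm{\v{x}_i-\v{x}^\zeta_i}\le\norm{\v{x}^\zeta_i-\v{x}_j}$ with $\v{x}^\zeta_i=\gamma(\rho_*)$ the nearest intersection; whereas $Z^{(t+1)}$ flags it when $\phi(0)=0$ or $\mathrm{sign}(\phi(0))\neq\mathrm{sign}(\phi(\tau/t))$ for some $\tau\in\{1,\dots,\lceil t/2\rceil\}$. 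The case $\phi(0)=0$ is trivial: then $\rho_*=0$, so $\v{x}_i\in\zeta$ is troubled and $Z^{(t+1)}$ detects it for every $t$.

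The easy direction is that spurious detections disappear. Suppose $Z^*$ does not flag $\v{x}_i$ via $e_{ij}$, i.e. $\phi$ has no zero in $[0,1/2]$. Then $\phi$ is continuous and nonvanishing on the compact interval $[0,1/2]$, hence of constant sign $\sigma=\mathrm{sign}(\phi(0))\neq 0$ there; by continuity this sign persists on a slightly larger interval $[0,1/2+\epsilon]$ for some $\epsilon>0$. For $t$ large enough that $\lceil t/2\rceil/t\le 1/2+1/t<1/2+\epsilon$, every knot $\gamma(\tau/t)$ with $\tau\le\lceil t/2\rceil$ lies in $[0,1/2+\epsilon]$, so $\mathrm{sign}(\phi(\tau/t))=\sigma=\mathrm{sign}(\phi(0))$ and no sign change is recorded. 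Thus $Z^{(t+1)}$ stops flagging $\v{x}_i$ via this edge for all large $t$, matching $Z^*$.

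For the converse I would exploit the density of the sampling: assume $\rho_*\le 1/2$. Here lies the main obstacle — a zero of $\phi$ is invisible to a sign test unless $\phi$ actually changes sign there, so a tangential contact such as $\phi(\rho)=(\rho-\rho_*)^2$ intersects $\zeta$ (hence $Z^*$ flags $\v{x}_i$) yet is never caught by $Z^{(t+1)}$. I would therefore record the natural transversality hypothesis that $\rho_*$ is sign-changing; this holds for any $f$ whose zero-level set genuinely separates regions of opposite sign, which is precisely the situation for the discontinuity interfaces used to build the training functions $g^{(q)}$. Under this hypothesis $\mathrm{sign}(\phi)\equiv\sigma\neq0$ on $[0,\rho_*)$ and $\phi$ takes the opposite sign immediately past $\rho_*$. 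For large $t$, either the knot just beyond $\rho_*$, at index $\tau=\lfloor\rho_* t\rfloor+1$, falls within $\{1,\dots,\lceil t/2\rceil\}$, where $\mathrm{sign}(\phi(\tau/t))\neq\sigma$; or, in the boundary case $\rho_*=1/2$ with $t$ even, the knot $\tau=\lceil t/2\rceil$ sits exactly at $\rho_*$, where $\phi=0$ and $\mathrm{sign}(\phi(\tau/t))=0\neq\sigma$. In all cases a sign change is recorded, so $Z^{(t+1)}$ flags $\v{x}_i$, matching $Z^*$. The argument for $\v{x}_j$ is identical after the reflection $\rho\mapsto 1-\rho$, which maps the index window $\{\lfloor t/2\rfloor,\dots,t-1\}$ onto the first-half window. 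Combining both directions over the finitely many incident edges yields the eventual equality $Z^{(t+1)}(\v{x})=Z^*(\v{x})$, which is the asserted limit.
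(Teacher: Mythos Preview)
Your argument follows the same skeleton as the paper's: parametrize a fixed edge $e_{ij}$ by $\gamma(\rho)=\v{x}_i+\rho(\v{x}_j-\v{x}_i)$, locate the first zero $\rho_*$ of $\phi=f\circ\gamma$, and argue that the sign test at the equispaced knots eventually detects it. The paper's proof, however, only treats the forward implication --- if $\v{x}_i$ is a true troubled point then $Z^{(t+1)}$ eventually flags it --- and never shows that spurious flags disappear; your compactness argument on $[0,1/2+\epsilon]$ supplies this missing converse, so your version is strictly more complete. You also make explicit the transversality hypothesis (that $\rho_*$ is sign-changing) which the paper invokes tacitly when it writes ``since $f$ is continuous \ldots\ the sign of $f$ is not equal to $\mathrm{sign}(f(\v{x}_i))$''; your counterexample $\phi(\rho)=(\rho-\rho_*)^2$ shows this is a genuine restriction on the class of $f$ for which the proposition holds as stated, not a defect in your proof.
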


\begin{proof}
Let $e_{ij}=\{\v{x}_i,\v{x}_j\}\in E'$ be an edge intersecting $\zeta$ and let $\v{x}_i^\zeta\in\zeta\cap e_{ij}$ be the nearest point to $\v{x}_i$ such that $\norm{\v{x}_i-\v{x}_i^\zeta}\leq \norm{\v{x}_i^\zeta-\v{x}_j}$; i.e., by definition, $\v{x}_i$ is a troubled point with respect to the zero-level set detection problem (generalization of \Cref{def:troubled_point}).

Then, there exists $\tau^\zeta\in[0, \norm{\v{x}_i -\v{x}_j}/2]$ such that
\begin{equation*}
    \v{x}_i^\zeta = \v{x}_i + \tau^\zeta \frac{\v{x}_j - \v{x}_i}{\norm{\v{x}_j - \v{x}_i}}\,.
\end{equation*}

Now, let $\widehat{\v{x}}_{i}$ be such that 
\begin{equation*}
    \widehat{\v{x}}_{i} = \v{x}_i + \frac{\widehat{\tau}}{t} (\v{x}_j-\v{x}_i)\,,
\end{equation*}
with $t\in\N$ fixed, $\widehat{\tau}\in\{1,\ldots ,\lceil t/2\rceil\}$, and 
\begin{equation*}
    \widehat{\tau}=\argmin_{\tau \geq \tau^\zeta} \tau - \tau^\zeta\,. 
\end{equation*}
Since $f$ is continuous, for each $t$ sufficiently large, we have that this $\widehat{\tau}$ exists such that $\widehat{\v{x}}_i$ is the first vector of the sequence $\{\v{x}_{ij}^{(\tau)}\}_{\tau=1,\ldots ,t}$ where the sign of $f$ is not equal to $\mathrm{sign}(f(\v{x}_i))$; i.e., $\v{x}_i$ is a troubled point for $Z^{(t+1)}$, detected thanks to $\widehat{\v{x}}_i$.

Let $h=\norm{\v{x}_i -\v{x}_j}/t$; then, by construction, we also have that $\norm{\v{x}_i^\zeta - \widehat{\v{x}}_i} < h$. Therefore, it holds 
\begin{equation*}
    \lim_{t\rightarrow +\infty}\norm{\v{x}_i^\zeta - \widehat{\v{x}}_i} = 0\,,
\end{equation*}
and the troubled points detected by $Z^{(t+1)}$ tend to coincide with the true troubled points detected by $Z^*$, when $t\rightarrow +\infty$.

\end{proof}

The zero-level set extension of the algorithm is particularly useful for building a good approximation of the troubled points returned by an exact discontinuity detector $\Delta^*$ for discontinuous functions with discontinuity points contained in the zero-level set of a continuous function; e.g., for a function $g:\Omega\subseteq\R^n\rightarrow\R$ such that
\begin{equation}\label{eq:g_piecewise_zerlevset}
    g(\v{x})=
    \begin{cases}
    g_1(\v{x})\,,\quad &\text{if }f(\v{x})\geq 0\\
    g_2(\v{x})\,,\quad &\text{otherwise}
    \end{cases}\,,
\end{equation}
where $g_1,g_2$, and $f$ are continuous functions in $\Omega$. 

Indeed, for a function as in \eqref{eq:g_piecewise_zerlevset}, we have that $\delta\subseteq\zeta$; then, assuming $f$ is known, it holds that the set of troubled points detected by an exact discontinuity detector $\Delta^*$ is contained in the set of troubled points detected by an exact zero-level set detector $Z^*$. Therefore, we can use a detector $Z^{(t+1)}$ to approximate $Z^*$ and, as a consequence, to approximate the troubled points returned by $\Delta^*$.

\subsection{Synthetic Dataset Creation}\label{sec:synth_data_creation}

At item \ref{item:random_discfuncs} of the procedure in \Cref{sec:sketch_procedure_NNbuilding}, we generate a set of random piece-wise continuous functions $g^{(q)}:\Omega\subseteq\R^n\rightarrow\R$, $q=1,\ldots ,Q$. In our numerical experiments (\Cref{sec:num_exp}), we generate a dataset from Legendre polynomial piece-wise continuous functions characterized by one linear, spherical (see \cite{Wang2022}), or polynomial discontinuity interface (see \cite{DISC4NN}). Indeed, similarly to the convolutional detectors in \cite{Wang2022}, we observe that our NN-based detectors show good generalization abilities in detecting discontinuities of different nature, even if trained only on these three limited cases (see \Cref{sec:num_exp}).

Let $P_n$ denote the Legendre polynomial of degree $n$; then, the random discontinuous functions in $\mathcal{G}$ we consider are piece-wise smooth functions $g:[-1,1]^n\rightarrow\R$ defined as \eqref{eq:g_piecewise_zerlevset}, where:
\begin{itemize}
    \item the functions $g_1,g_2:[-1,1]^n\rightarrow\R$ are defined as
    \begin{equation}\label{eq:n_dim_legendre}
        g_j(\v{x}) = \sum_{\v{h}\in\mathcal{I}_{\rm sum}(4)} \left(c_{\v{h}}\prod_{i=1}^n P_{h_i}\left(\frac{x_i + 1}{2}\right)\right)\,,\quad \forall \ j=1,2\,,
    \end{equation}
    with $\v{h}=(h_1,\ldots ,h_n)\in\N^n$ multi-index and $c_{\v{h}}$ random coefficients sampled from the normal distribution $\mathcal{N}(0, 10)$;

    \item the function $f$ characterizing the discontinuity interface of $g$ is also generated randomly. We consider three types of interfaces, labeled with specific symbols.
    \begin{enumerate}
        \item \emph{Linear cut:}
        \begin{equation*}\label{eq:linear_cut}
            \eta(\v{x})= \v{x}^T\frac{\v{w}}{\norm{\v{w}}} + b\,,
        \end{equation*}
        where $w_i\in\mathcal{N}(0,1)$ and $b\in\mathcal{U}(-1, 1)$.
        
        \item \emph{Spherical cut:}
        \begin{equation*}\label{eq:spherical_cut}
            \sigma(\v{x})= \norm{\v{x} - \v{c}} - r\,,
        \end{equation*}
        where $c_i\in\mathcal{U}(-1, 1)$ and $r = \min\{0.2,\rho\}$, with $\rho\in\mathcal{U}(0, \sqrt{n})$.
        
        \item \emph{Polynomial cut:}
        \begin{equation*}\label{eq:polynomial_cut}
            \pi(\v{x})= C \frac{\Pi(\v{\xi})}{\max_{\v{y}\in [-1,1]^{n-1}}|\Pi(\v{y})|} - x_{\iota}\,,
        \end{equation*}
        where $C\in \mathcal{U}(0.75, 1.15)$, $\iota\in\mathcal{U}(\{1,\ldots ,n\})$, $\v{\xi}:=(x_1,\ldots ,x_{\iota-1},x_{\iota+1},\ldots,x_n)\in\R^{n-1}$, and
        \begin{equation*}\label{eq:polynomial_cut_polypart}
            \Pi(\v{\xi})= \sum_{\v{h}\in\mathcal{I}_{\rm sum}(4)} \left(c_{\v{h}}\prod_{i=1, i\neq\iota}^n P_{h_i}\left(\frac{x_i + 1}{2}\right)\right)\,,
        \end{equation*}
        with $\v{h}=(h_1,\ldots ,h_{\iota-1},h_{\iota+1},\ldots,h_n)\in\N^{n-1}$ multi-index and $c_{\v{h}}\in\mathcal{N}(0,10)$.
        
        Namely, $\pi$ is the polynomial $\Pi$ rescaled to a maximum absolute value equal to C, minus $x_\iota$. See \Cref{fig:algorithm_scheme} and \Cref{sec:num_exp} for 2D examples of such a kind of discontinuity interface.
        
    \end{enumerate}
\end{itemize}

\subsection{Preprocessing of Input Data}\label{sec:data_prep}

At step \ref{item:NNtraining_step} of the procedure in \Cref{sec:sketch_procedure_NNbuilding}, we train the NN on the synthetic data generated. We observe that the inputs $\v{g}'$ have a range of values that depends on the random generation of the discontinuous functions in $\mathcal{G}$ (on which we have a limited control); then, a preprocessing of the input data is necessary in order to make the NN-based detector applicable to any discontinuous function $g$ (independently on the values it assumes on the sparse grid points).

Typically, NN inputs are standardized (i.e., centered in zero and with normalized standard deviation) but, for our task, we use a different preprocessing function. Indeed the standardization must be performed with respect to the mean and standard deviation vectors of the training data; therefore, it is not suitable for discontinuous functions particularly different from the ones randomly generated at items \ref{item:random_discfuncs}-\ref{item:zlc_det_dataset_02}.

Let $\gamma:\R^n\rightarrow\R^n$ denote the function representing the preprocessing operation of the input data. Naive but efficient preprocessing functions for our needs are the rescaling functions, because they permit to ``feed'' the NN with vectors having all the values into a chosen range $[\alpha,\beta]$. In particular, we use the following function:
\begin{equation}\label{eq:absmax_scaler}
    \gamma(\v{g}') = 
    \begin{cases}
        \v{0}\,,\quad & \text{if }g'_{\max}= g'_{\min}=0\\
        \v{g}' / \max\{|g'_{\max}|, |g'_{\min}|\})\,,\quad & \text{otherwise}
    \end{cases}
    \,,
\end{equation}
such that the elements of $\gamma(\v{g}')$ are always between $-1$ and $1$, and where $g'_{\min}:=\min\{g(\v{x}')\,|\,\v{x}'\in\mathcal{S}')\}$, $g'_{\max}:=\max\{g(\v{x}')\,|\,\v{x}'\in\mathcal{S}')\}$. We prefer \eqref{eq:absmax_scaler} than a simplest rescaling function with values between 0 and 1, defined by
\begin{equation}\label{eq:minmax_scaler}
    \gamma(\v{g}') = 
    \begin{cases}
        \v{0}\,,\quad & \text{if }g'_{\max}= g'_{\min}\\
        (\v{g}' - g'_{\min}\, \v{e})/(g'_{\max} - g'_{\min})\,,\quad & \text{otherwise}
    \end{cases}
    \,,
\end{equation}
where $\v{e}=(1,\ldots ,1)\in\R^N$, because \eqref{eq:minmax_scaler} always transforms almost-constant vectors into vectors that are not almost-constant. Some preliminary investigations confirmed the effectiveness of our choice, since NNs trained using \eqref{eq:absmax_scaler} showed better performances than NNs trained using \eqref{eq:minmax_scaler} (in particular, on piece-wise constant functions).

\subsection{Architecture Archetypes}\label{sec:NNarch_archetypes}

For both MLPs and GINNs, we use a fixed architecture archetype, with characteristics depending on the SGG. We postpone to future work experiments based on different architectures and/or hyper-parameters.

The architecture archetype we use is based on residual blocks \cite{He2016_ResidualNN} and exploits also the batch-normalization \cite{Ioffe2015_BATCHNORMALIZATION}. The number of units per layer is $N$ (i.e., the number of sparse grid nodes) while the depth is characterized by the diameter of the SGG; this depth characterization has been chosen mainly thinking to GINN models (see \cite[Proposition 1]{GINN}), but it is a good choice also for MLPs, since residual blocks permit to better exploit the model's depth (see \cite{He2016_ResidualNN}).

Let $\mathcal{S}$ be a sparse grid in $\R^n$ of $N$ points and let $A$ be the adjacency matrix of $G$, the WSGG based on $\mathcal{S}$. The architecture archetype we consider (see \Cref{fig:NNres_archetype}) is defined by:
\begin{itemize}
    \item One input layer $I$ of $N$ units;
    
    \item One hidden GI/FC layer $L_1$ of $N$ units and activation function $\psi$. If it is a GI layer, it is based on $A$ and it is characterized by $F\geq 1$ output features per node;
    
    \item One Batch-normalization layer $B_1$;
    
    \item $\left\lfloor \mathrm{diam}(G) / 2 \right\rfloor$ residual block. In particular, for $h=1,\ldots , \left\lfloor \mathrm{diam}(G) / 2 \right\rfloor$, the residual block $R_{h+1}$ is given by:
    \begin{itemize}
        \item  One hidden GI/FC layer $L_{h+1}'$ of $N$ units and activation function $\psi$. If it is a GI layer, it is based on $A$ and it is characterized by $F\geq 1$ input and output features per node;
        
        \item One Batch-normalization layer $B_{h+1}'$;
        
        \item  One hidden GI/FC layer $L_{h+1}''$ of $N$ units and \emph{linear} activation function. If it is a GI layer, it is based on $A$ and it is characterized by $F\geq 1$ input and output features per node;
        
        \item One layer $\Sigma_{h+1}$ that returns the sum of the outputs of $L_{h+1}''$ and $\Sigma_h$ ($\Sigma_1\equiv L_1$ by convention), applying to it the activation function $\psi$.
        
        \item One Batch-normalization layer $B_{h+1}''$;
    \end{itemize}
    
    \item One GI/FC layer $L_{\rm fin}$ of $N$ units and \emph{sigmoid} activation function. If it is a GI layer, it is based on $A$ and it is characterized by $F\geq 1$ output features per node; moreover, it is also endowed with a pooling operation, that aggregates the $F$ output features per node into one (with values still in $[0,1]$).
\end{itemize}

\begin{figure}[htb!]
    \centering
    \includegraphics[height=0.5\textheight]{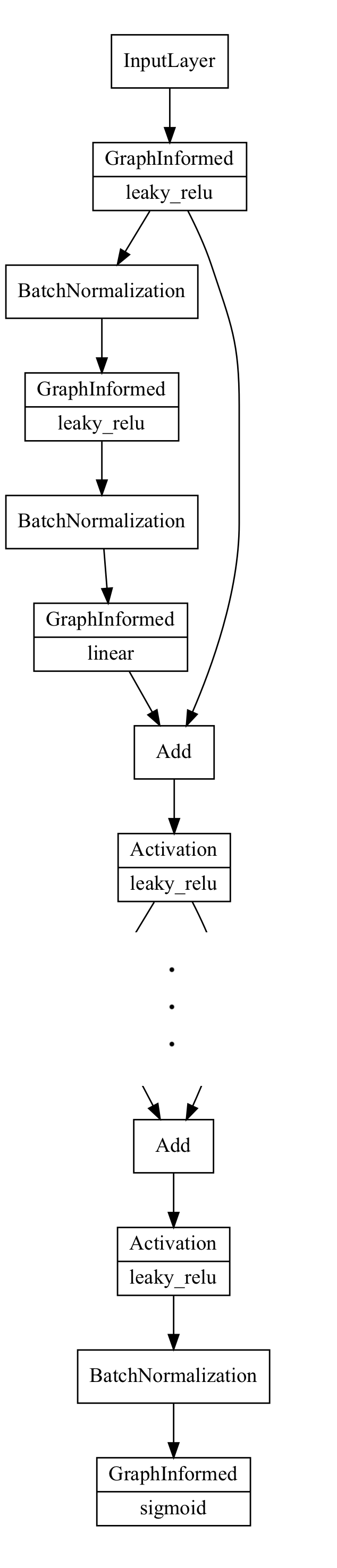}
    \caption{Example of architecture archetype described in \Cref{sec:NNarch_archetypes} for a GINN model, written in Keras (Tensorflow) \cite{tensorflow2015-whitepaper,keras2015}. In this example, the activation function used is the \emph{leaky relu} function.}
    \label{fig:NNres_archetype}
\end{figure}


 \bibliographystyle{elsarticle-num} 
 \bibliography{References/aiPapers,References/DellaPapers,References/discontinuous,References/NetworksPapers,References/optimizationPapers,References/otherPapers,References/SparseGridPapers,References/SBgroupPapers}





\end{document}